\newif\ifarXiv         
\newif\ifjournal        

\journalfalse \arXivtrue  

\ifjournal 
\documentclass[final,onefignum,onetabnum]{siamonline250211}

\usepackage[mathlines]{lineno}
\linenumbers

\usepackage{braket,amsfonts}

\usepackage{array}

\usepackage[caption=false]{subfig}
\captionsetup[subtable]{position=bottom}
\captionsetup[table]{position=bottom}

\usepackage{pgfplots}

\newsiamthm{claim}{Claim}
\newsiamremark{remark}{Remark}
\newsiamremark{hypothesis}{Hypothesis}
\crefname{hypothesis}{Hypothesis}{Hypotheses}

\usepackage{algorithmic}

\usepackage{graphicx,epstopdf}

\Crefname{ALC@unique}{Line}{Lines}

\usepackage{amsopn}

\usepackage{xspace}
\usepackage{bold-extra}
\usepackage[most]{tcolorbox}

\newcounter{example}
\colorlet{texcscolor}{blue!50!black}
\colorlet{texemcolor}{red!70!black}
\colorlet{texpreamble}{red!70!black}
\colorlet{codebackground}{black!25!white!25}


\lstdefinestyle{siamlatex}{%
  style=tcblatex,
  texcsstyle=*\color{texcscolor},
  texcsstyle=[2]\color{texemcolor},
  keywordstyle=[2]\color{texemcolor},
  moretexcs={cref,Cref,maketitle,mathcal,text,headers,email,url},
}

\tcbset{%
  colframe=black!75!white!75,
  coltitle=white,
  colback=codebackground, 
  colbacklower=white, 
  fonttitle=\bfseries,
  arc=0pt,outer arc=0pt,
  top=1pt,bottom=1pt,left=1mm,right=1mm,middle=1mm,boxsep=1mm,
  leftrule=0.3mm,rightrule=0.3mm,toprule=0.3mm,bottomrule=0.3mm,
  listing options={style=siamlatex}
}

\newtcblisting[use counter=example]{example}[2][]{%
  title={Example~\thetcbcounter: #2},#1}

\newtcbinputlisting[use counter=example]{\examplefile}[3][]{%
  title={Example~\thetcbcounter: #2},listing file={#3},#1}

\DeclareTotalTCBox{\code}{ v O{} }
{ 
  fontupper=\ttfamily\color{black},
  nobeforeafter,
  tcbox raise base,
  colback=codebackground,colframe=white,
  top=0pt,bottom=0pt,left=0mm,right=0mm,
  leftrule=0pt,rightrule=0pt,toprule=0mm,bottomrule=0mm,
  boxsep=0.5mm,
  #2}{#1}

\patchcmd\newpage{\vfil}{}{}{}
\flushbottom

\usepackage{amssymb}
\usepackage{graphicx}          
\usepackage{amsmath}         
\usepackage{mathtools}
\usepackage{mathrsfs}
\usepackage{bbm}
\usepackage{caption}
\usepackage{dsfont}
\usepackage{listings}
\usepackage{titlepic}
\usepackage{xcolor}
\usepackage{array}
\usepackage{booktabs}
\usepackage{bm}

\usepackage{mathalfa}
\usepackage{todonotes}

\usepackage{enumitem} 
\usepackage{hyperref}
\usepackage{geometry}
\usepackage{braket}
\usepackage{physics}

\usepackage{algorithm}
\usepackage{algorithmic}
\usepackage{enumitem}

\usepackage[
backend=bibtex,
style=numeric,
isbn=false,
maxnames=99
]{biblatex}

\addbibresource{Langevin.bib}
\addbibresource{Schur.bib}
\addbibresource{Delay_FPE.bib}
\addbibresource{LLM.bib}
\addbibresource{Decay_rate.bib}
\addbibresource{GLE_Error.bib}

\DeclareSourcemap{
  \maps[datatype=bibtex, overwrite]{
    \map{
      \step[fieldset=address, null]
      \step[fieldset=editor, null]
      \step[fieldset=location, null]
    }
  }
}

\AtEveryBibitem{%
  	\clearfield{issn} 
  	\clearfield{doi} 
  	\clearfield{urldate}
	\clearfield{month}
	\clearfield{adress}
  	\ifentrytype{online}{}{
    \clearfield{url}
  }
}

\hypersetup{
    colorlinks=true,
    linkcolor=blue,
    filecolor=magenta,      
    urlcolor=cyan,
    citecolor = blue,
}

\geometry{left = 2.5cm, right=2.5cm, top=2cm, bottom=2cm}




\newtheorem{thm}{Theorem}[section]
\newtheorem{cor}[thm]{Corollary}
\newtheorem{assumption}{Assumption}
\renewtheorem{example}[thm]{Example}

\numberwithin{equation}{section}



\newcommand{\rbracket}[1]{\left(#1\right)}      
\newcommand{\sbracket}[1]{\left[#1\right]}      
\newcommand{\nnorm}[1]{|\!|\!|#1|\!|\!|}
\newcommand{\innerp}[1]{\langle{#1}\rangle}


\def\mU{\mathcal{U}}


\def\R{\mathbb{R}}
\def\E{\mathbb{E}}



\DeclareMathOperator{\Law}{Law}


\usepackage{todonotes}

\headers{Error Analysis of Generalized Langevin Equations}{Q. Lang and J. Lu}
\fi

\ifarXiv     
\documentclass[10pt]{article}  

\usepackage{amssymb}
\usepackage{amsthm}    
\usepackage{graphicx}          
\usepackage{amsmath}         
\usepackage{mathtools}
\usepackage{subfigure}
\usepackage{mathrsfs}
\usepackage{bbm}
\usepackage{caption}
\usepackage{dsfont}
\usepackage{listings}
\usepackage{titlepic}
\usepackage{xcolor}
\usepackage{array}
\usepackage{booktabs}
\usepackage{bm}

\usepackage{mathalfa}
\usepackage{todonotes}

\usepackage{enumitem} 
\usepackage{hyperref}
\usepackage{geometry}
\usepackage{braket}
\usepackage{physics}

\usepackage{algorithm}
\usepackage{algorithmic}
\usepackage{enumitem}

\usepackage{hyperref}   
\usepackage{cleveref}   

\usepackage[
backend=bibtex,
style=numeric,
isbn=false,
maxnames=99
]{biblatex}

\addbibresource{Langevin.bib}
\addbibresource{Schur.bib}
\addbibresource{Delay_FPE.bib}
\addbibresource{LLM.bib}
\addbibresource{Decay_rate.bib}
\addbibresource{GLE_Error.bib}

\DeclareSourcemap{
  \maps[datatype=bibtex, overwrite]{
    \map{
      \step[fieldset=address, null]
      \step[fieldset=editor, null]
      \step[fieldset=location, null]
    }
  }
}

\AtEveryBibitem{%
  	\clearfield{issn} 
  	\clearfield{doi} 
  	\clearfield{urldate}
	\clearfield{month}
	\clearfield{adress}
  	\ifentrytype{online}{}{
    \clearfield{url}
  }
}

\hypersetup{
    colorlinks=true,
    linkcolor=blue,
    filecolor=magenta,      
    urlcolor=cyan,
    citecolor = blue,
}

\geometry{left = 2.5cm, right=2.5cm, top=2cm, bottom=2cm}



\newtheoremstyle{plainupright} 
  {3pt}                         
  {3pt}                         
  {\normalfont}                 
  {}                            
  {\bfseries}                   
  {.}                           
  {.5em}                        
  {}                            

\theoremstyle{plain}  
\newtheorem{thm}{Theorem}[section]
\newtheorem{theorem}{Theorem}[section]
\newtheorem{definition}[thm]{Definition}

\newtheorem{lemma}[thm]{Lemma}
\newtheorem{cor}[thm]{Corollary}

\newtheorem{assumption}{Assumption}
\newtheorem{proposition}[thm]{Proposition}

\theoremstyle{plain}
\newtheorem{remark}[thm]{Remark}
\newtheorem{example}[thm]{Example}

\numberwithin{equation}{section}



\newcommand{\rbracket}[1]{\left(#1\right)}      
\newcommand{\sbracket}[1]{\left[#1\right]}      
\newcommand{\nnorm}[1]{|\!|\!|#1|\!|\!|}
\newcommand{\innerp}[1]{\langle{#1}\rangle}


\def\mU{\mathcal{U}}


\def\R{\mathbb{R}}
\def\E{\mathbb{E}}



\DeclareMathOperator{\Law}{Law}


\usepackage{todonotes}

\newcommand{\funding}[1]{{Funding: #1}}
\newcommand{\email}[1]{{#1}}
\fi

\title{Error Analysis of Generalized Langevin Equations with Approximated Memory Kernels \thanks{
\ifjournal  Submitted to the editors DATE. \fi
\funding{The research is supported in part by the National Science Foundation through awards DMS-2309378 and IIS-2403276.}}}
\author{Quanjun Lang\thanks{Department of Mathematics, Duke University, Durham, NC (\email{quanjun.lang@duke.edu}).} \ and Jianfeng Lu\thanks{Departments of Mathematics, Physics, and Chemistry, Duke University, Durham, NC (\email{jianfeng@math.duke.edu}).}}

\date{}

\begin{document}

\maketitle

\begin{abstract}
    We analyze prediction error in stochastic dynamical systems with memory, focusing on generalized Langevin equations (GLEs) formulated as stochastic Volterra equations. We establish that, under a strongly convex potential, trajectory discrepancies decay at a rate determined by the decay of the memory kernel and are quantitatively bounded by the estimation error of the kernel in a weighted norm. Our analysis integrates synchronized noise coupling with a Volterra comparison theorem, encompassing both subexponential and exponential kernel classes. For first-order models, we derive moment and perturbation bounds using resolvent estimates in weighted spaces. For second-order models with confining potentials, we prove contraction and stability under kernel perturbations using a hypocoercive Lyapunov-type distance. This framework accommodates non-translation-invariant kernels and white-noise forcing, explicitly linking improved kernel estimation to enhanced trajectory prediction. Numerical examples validate these theoretical findings.
\end{abstract}

\tableofcontents

\section{Introduction}
The generalized Langevin equation (GLE) describes the dynamics of a particle influenced by conservative forces, friction with memory, and stochastic forcing. The GLE for position \( X_t \in \mathbb{R}^d \) and velocity \( V_t\) is given by
\begin{equation*}
    dX_t = V_t dt, \quad dV_t = \left(-\gamma V_t -\nabla U(X_t) - \int_0^t K(t-s)\, V_s \, ds \right)dt+ d\eta(t),
\end{equation*}
where \( U(x):\R^d \to \R \) is a confining potential, $\gamma>0$ is the friction parameter, and \( \eta(t) \in \mathbb{R}^d \) is a random noise process. The kernel \( K(t) \) accounts for history-dependent interaction, and in thermal equilibrium, the fluctuation-dissipation theorem (FDT) requires the noise to satisfy
\begin{equation*}
    \mathbb{E}[\eta(t) \eta(s)^\top] = \beta^{-1} K(t-s),
\end{equation*}
where \( \beta \) is the inverse temperature. In the absence of external forcing, i.e., when \( \nabla U \equiv 0 \), the second-order GLE can be reduced to a first-order stochastic integro-differential equation for the velocity:
\begin{equation} \label{eq:first_order_gle}
    dV_t = \left(-\gamma V_t -\int_0^t K(t-s)\, V_s \, ds\right)dt + d\eta(t).
\end{equation}
Such memory-dependent equations arise naturally in Mori-Zwanzig coarse-graining of high-dimensional systems, where non-Markovian terms are essential to capture effective dynamics \cite{zwanzig2001nonequilibrium, grabert2006projection, li2010coarse}.


In practice, it is common to model the forcing $\eta(t)$ as mean-zero Gaussian white noise with fixed covariance $\sigma\sigma^\top \in \mathbb{R}^{d\times d}$, independent of the memory kernel $K(t)$.
This leads to a simplified non-Markovian model with delta-correlated noise:
\[
    \mathbb{E}[\eta(t)] = 0, \qquad
    \mathbb{E}\!\left[\eta(t)\eta(s)^\top\right] = \sigma\sigma^\top\,\delta(t-s).
\]
Moreover, non-translation-invariant memory kernels are also used, that is, \( K(t,s) \) that depend on both time variables independently. This direction is motivated in part by recent developments in large language models (LLMs), particularly the HiPPO state-space model framework \cite{gu2020hippo, gu2021efficiently}, which represents history using projections onto time-evolving orthogonal polynomial bases and yields explicit time-dependent kernels. Such constructions naturally lead to kernels with explicit time dependence and are well-suited for modeling non-stationary and transient dynamics. With this broader view, we focus on two simplified yet widely applicable variants of the GLE framework.
\begin{itemize}
    \item The \emph{first-order GLE with white noise}:
          \begin{equation} \label{eq:first_order_white_noise_intro}
              dV_t = -\gamma V_t \, dt -\int_0^t K(t, s) V_s \, ds\,dt + \sigma dB_t.
          \end{equation}
    \item The \emph{second-order GLE with white noise}:
          \begin{equation} \label{eq:second_order_white_noise_intro}
              dX_t = V_t \, dt, \quad dV_t = -\gamma V_t \, dt - \nabla U(X_t)\, dt - \int_0^t K(t, s)V_s\,ds\,dt + \sigma\,dB_t.
          \end{equation}
\end{itemize}
In both of the above equations, \( B_t\in \R^d\) is a standard Brownian motion, $\sigma \in \R^{d\times d}$ is the diffusion matrix, and \( K(t, s) \in \R^{d\times d}\) is a matrix-valued memory kernel.

\begin{remark}
    We emphasize that the first-order GLE above is not obtained as the overdamped (large-friction) limit of the second-order GLE. In the overdamped regime, one eliminates the inertia variable and derives an effective non-Markovian dynamics for the position process alone, with a memory kernel that differs from the one in our formulation of the first-order GLE. 
\end{remark}

These models capture a broad class of non-Markovian stochastic systems and support modeling, simulation, and inference in physical, biological, financial, and scientific computing \cite{horenko2007data, karmeshu2011neuronal, wehrli2021scale, ruiz2024benefits}. We focus on \emph{error analysis} when the memory kernel \(K(t,s)\) is learned from data or approximated with perturbation. Our goal is to quantify how kernel approximation errors affect trajectory behavior, thereby informing both prediction and interpretability.



\subsection{Main contributions}

Our main result shows that the estimation error of the memory kernel directly controls the prediction error of generalized Langevin dynamics. In other words, if the approximate kernel is sufficiently well-behaved (in the sense of satisfying mild integrability and decay assumptions), then the difference between the true and perturbed trajectories remains uniformly bounded, with magnitude proportional to the kernel discrepancy. The primary tool we use is the resolvent of Volterra equations, along with a comparison theorem. We summarize the main results as follows.

\begin{theorem}[Informal main result]
    Let $K$ be the true memory kernel and $\tilde K$ an estimated kernel. Consider the generalized Langevin dynamics \eqref{eq:first_order_white_noise_intro} and \eqref{eq:second_order_white_noise_intro}
    driven by $K$ and its perturbed version driven by $\tilde K$, under synchronized noise. Suppose that both kernels satisfy standard regularity and decay fast enough compared to a given function $h$. Then the coupled dynamics are contractive with respect to an appropriate quadratic distance on trajectories. Moreover, the trajectory error decays at least at the rate prescribed by h and is proportional to the $L^2$ kernel estimation error weighted by $h$. See \Cref{thm:Wt_contraction} and \Cref{thm:error_contraction_second} for precise statements and constants.
\end{theorem}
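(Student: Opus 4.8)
The plan is to handle the two models separately: the first-order GLE \eqref{eq:first_order_white_noise_intro} is linear and yields to Volterra resolvent techniques in weighted spaces, while the second-order GLE \eqref{eq:second_order_white_noise_intro} couples position and velocity through $\nabla U$ and needs a hypocoercive Lyapunov argument. In both cases the common first step is the synchronized coupling: drive the two dynamics with the same Brownian motion and the same initial condition, so that in the equation for the trajectory difference the stochastic forcing $\sigma\, dB_t$ cancels exactly and only a deterministic-in-structure kernel-discrepancy term remains.

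\textbf{First-order case.} Let $V_t,\tilde V_t$ solve \eqref{eq:first_order_white_noise_intro} with kernels $K,\tilde K$ under synchronized noise, and set $W_t = V_t - \tilde V_t$. Then $W_0 = 0$ and, pathwise,
\[
  \dot W_t = -\gamma W_t - \int_0^t K(t,s)\,W_s\,ds + F_t,
  \qquad
  F_t = -\int_0^t \bigl(K(t,s) - \tilde K(t,s)\bigr)\tilde V_s\,ds .
\]
First I would record the variation-of-constants formula $W_t = \int_0^t R(t,s)\,F_s\,ds$, where $R(t,s)$ is the resolvent (fundamental solution) of the homogeneous part, and establish a weighted resolvent bound of the shape $\|R(t,s)\| \lesssim h(t)/h(s)$ under the decay hypotheses relating $K$ to $h$. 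Next I would bound the forcing: Cauchy--Schwarz in $s$ together with the uniform second-moment estimate $\sup_s \E\|\tilde V_s\|^2 < \infty$ — itself obtained from the It\^o isometry and a resolvent bound for the stable $\tilde K$-dynamics — controls $\E\|F_t\|^2$ by the $h$-weighted $L^2$ norm of $K-\tilde K$ (up to an $h$-dependent prefactor). Feeding this into the representation and invoking the Volterra comparison theorem to propagate the $h$-decay through the resulting convolution-type integral gives both the contraction in the quadratic distance and the bound $\E\|W_t\|^2 \lesssim h(t)^2\,\|K-\tilde K\|_h^2$, i.e.\ \Cref{thm:Wt_contraction}.

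\textbf{Second-order case.} Writing $\Delta X_t = X_t - \tilde X_t$ and $\Delta V_t = V_t - \tilde V_t$, the difference solves $\dot{\Delta X}_t = \Delta V_t$ and
\[
  \dot{\Delta V}_t = -\gamma\,\Delta V_t - \bigl(\nabla U(X_t) - \nabla U(\tilde X_t)\bigr) - \int_0^t K(t,s)\,\Delta V_s\,ds + F_t ,
\]
with the same kind of kernel-discrepancy forcing $F_t$. The strategy is a hypocoercive Lyapunov functional
\[
  \mathcal{E}(t) = \tfrac{a}{2}\|\Delta X_t\|^2 + b\,\langle \Delta X_t, \Delta V_t\rangle + \tfrac{c}{2}\|\Delta V_t\|^2 + \tfrac{1}{2}\int_0^t \varphi(t,s)\,\|\Delta V_s\|^2\,ds ,
\]
with constants $a,b,c>0$ and a memory weight $\varphi$ chosen so that the time derivative of the integral term absorbs the sign-indefinite contribution of $\int_0^t K(t,s)\,\Delta V_s\,ds$ that appears when differentiating $\langle\Delta X_t,\Delta V_t\rangle$ and $\|\Delta V_t\|^2$. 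Differentiating $\mathcal{E}$, using $m$-strong convexity of $U$ to convert $-\langle\Delta X_t,\nabla U(X_t)-\nabla U(\tilde X_t)\rangle$ into a term $\le -m\|\Delta X_t\|^2$ plus a controllable cross term, and tuning $a,b,c$ as in classical hypocoercivity (now with the extra memory bookkeeping) produces a differential inequality $\dot{\mathcal{E}}(t) \le -\lambda(t)\,\mathcal{E}(t) + C\,\E\|F_t\|^2$ whose rate $\lambda(t)$ is dictated by $h$; a Gr\"onwall/comparison argument then delivers the contraction and the $h$-weighted trajectory-error bound of \Cref{thm:error_contraction_second}.

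\textbf{Main obstacle.} In the first-order case I expect the hard part to be the weighted resolvent estimate for \emph{non-convolution} kernels $K(t,s)$: the classical Gripenberg--Londen--Staffans resolvent theory is developed for convolution kernels, so the genuine two-variable dependence, combined with a possibly subexponential weight $h$, requires a separate argument — a Neumann-series or fixed-point bound for $R$ in the $h$-weighted space, or a direct scalar comparison. In the second-order case the difficulty is reconciling three competing mechanisms inside a single functional: the hypocoercive cross term $b\langle\Delta X_t,\Delta V_t\rangle$ needed to recover coercivity in position, the memory term needed to tame $\int_0^t K(t,s)\,\Delta V_s\,ds$, and the target decay rate $h$, which is generally slower than the exponential rate plain hypocoercivity would give. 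Making these compatible — and in particular choosing $\varphi$ together with the scalar comparison equation consistently — is the technical heart of the proof.
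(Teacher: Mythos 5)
Your proposal takes a genuinely different route from the paper in both cases, and in the first-order case there is a real gap.

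The paper never passes through a variation-of-constants representation with a two-variable resolvent. Instead, in both the first- and second-order cases, it applies It\^{o}'s formula to the scalar observable $\E|W_t|^2$ (respectively $\E R_t$, with $R_t$ the squared hypocoercive distance), then uses Cauchy--Schwarz with the weight $h$ (the ``Schur test'' behind Definition~\ref{def_Schur_norm}) and Young's inequality to convert the matrix-valued, non-translation-invariant kernel into the scalar \emph{convolution} kernel $h(t-s)$. This yields an integro-differential inequality $y' \le -a y + \int_0^t k(t-s)y(s)\,ds + g(t)$ with $k \propto h$, and the subexponential decay is then delivered entirely by the Volterra comparison theorem (Theorem~\ref{thm_main}). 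The hard step you anticipate --- a resolvent bound in $h$-weighted spaces for non-convolution kernels --- is deliberately bypassed; only the resolvent of the scalar convolution kernel $h$ ever enters.

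In the first-order case your argument has a concrete gap. You bound $\sup_s \E\|\tilde V_s\|^2 < \infty$ (correct, since $\tilde V$ is a stable stochastic process with persistent noise) and conclude that $\E\|F_t\|^2 \lesssim \|K-\tilde K\|_h^2$ up to a prefactor --- crucially, a bound that does \emph{not} decay in $t$. Feeding that into $W_t = \int_0^t R(t,s)F_s\,ds$ with $\|R(t,s)\|\lesssim h(t)/h(s)$ gives $\|W_t\| \lesssim h(t)\int_0^t \|F_s\|/h(s)\,ds$, and $\int_0^t ds/h(s)$ diverges for every admissible $h$ (it grows like $e^{|\mu|t}$ in the exponential case and faster than $t$ in the subexponential case). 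So the stated ingredients do not combine to a finite bound, let alone the claimed $h(t)^2$ decay. The claimed conclusion $\E\|W_t\|^2 \lesssim h(t)^2\|K-\tilde K\|_h^2$ is also structurally wrong when $\sigma>0$: the paper's Theorem~\ref{thm:Wt_contraction} shows that the error settles at a floor $c_2 \propto \|K-\tilde K\|_h^2\,\Tr(\sigma\sigma^\top)$, reflecting that $\tilde V_t$ is driven and does not decay, so $F_t$ keeps forcing the difference. The paper's Cauchy--Schwarz-with-weight inequality places $h(t-s)$ --- not $1/h(s)$ --- inside the memory integral, which is exactly what makes the Volterra comparison close and exhibit this floor.

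In the second-order case your plan (a Lyapunov functional augmented with a memory term $\int_0^t\varphi(t,s)\|\Delta V_s\|^2\,ds$) is a recognizable alternative, and you correctly identify its central tension: a Gr\"onwall step on a differential inequality with time-dependent rate $\lambda(t)$ would need to reproduce the subexponential rate of $h$, and choosing $\varphi$ consistently with that is delicate. The paper sidesteps this by keeping the Lyapunov function a \emph{fixed} quadratic form (matrices $A,B,C$ in \eqref{eq_ABC}, no integral term), obtaining a constant contraction rate $-2\lambda\gamma$ from the Markovian part via Proposition~\ref{prop_gamma}, moving all the memory into the right-hand side as a convolution against $h$ (again via the Schur bound), and deferring the subexponential analysis entirely to the Volterra comparison theorem. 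This decoupling --- Markovian hypocoercivity giving a constant $a$, and Theorem~\ref{thm_main} supplying the $h$-rate --- is simpler and avoids the compatibility issue you flag. Your approach might be made to work, but it is not what the paper does and you have not resolved the obstruction you yourself identify.
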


\subsection{Related works}

\paragraph{Memory kernel identification}
Recovering the memory kernel in the GLE is a central problem for constructing accurate non-Markovian models.
The authors in~\cite{lei2016data} proposed a rational approximation method in the Laplace domain to estimate the kernel.
The authors in~\cite{russo2022machine} developed a machine-learning framework that integrates a multilayer perceptron (MLP) with the GLE to learn memory kernels directly from data, enabling data-driven reduced non-Markovian modeling.
The authors in~\cite{bockius2021model} employed the Prony method to approximate autocorrelation functions and represented the memory kernel via finite-dimensional Markovian embeddings.
Our previous work~\cite{lang2024learning} decoupled the learning process into two stages: a regularized Prony method to estimate autocorrelation functions, followed by a balanced Sobolev-norm-based regression that guarantees theoretical performance bounds.
In particular, the kernel \(L^2\) estimation error is rigorously controlled by the error in the estimated autocorrelation functions.

\paragraph{Error analysis}
Analyses of the Mori-Zwanzig formalism provide a priori error estimates for short memory, $t$-model, and hierarchical finite-memory closures, including convergence conditions and computable upper bounds on the memory integral \cite{zhu2018estimation}. These results bound the errors of \emph{closure approximations} rather than providing stability of a prescribed non-Markovian model in terms of an explicit norm for the kernel discrepancy. In \cite{de2012kernel}, the authors used the resolvent-kernel framework to prove that, for second-kind Volterra equations with non-negative convolution kernels, relative perturbations in the kernel lead to bounded relative errors in the solution (including defect-renewal equations as a special case). This is close to our setting. However, they did not consider the specific decay rate of the trajectory error and different types of memory kernels. In \cite{duong2022accurate}, the authors considered the Prony series approximation of the memory kernel and provided error analysis for a simple case of a one-dimensional harmonic oscillator. However, the kernel is assumed to have exponential decay, and the error analysis for more general cases remains open.

\paragraph{Decay rate analysis for equations with memory}
The classical reference \cite{gripenberg1990volterra} provides a comprehensive treatment of Volterra theory, including resolvent bounds, contractivity results, and their stochastic extensions in weighted $L^p$ spaces. These tools are fundamental for analyzing systems with memory and characterizing their decay properties. Later work considered the decay rate using characteristic equations \cite{kordonis1999behavior}. Building on this foundation, Appleby and collaborators investigated Volterra differential equations under various settings \cite{appleby2002non, appleby2002subexponential, appleby2006exact}. In particular, they developed a rigorous framework for describing trajectory decay rates for a broad class of subexponential functions. This line of work was later extended to stochastic Volterra equations in \cite{reynolds2008decay}, though in a slightly different setting where the diffusion term depends linearly on the state variable, leading to trajectories and noise that both decay to zero. Perturbation effects were also examined in \cite{appleby2017memory}, where the analysis relies on direct comparison arguments rather than an $L^2$-based error framework. Our analysis extends these approaches to the GLE context.

On the other hand, for fluctuation-dissipation balanced GLE, where the noise term is correlated with the memory kernel, the decay behavior has been investigated in detail in \cite{glatt2020generalized, herzog2023gibbsian, duong2024asymptotic}. These studies cover systems with power-law and singular memory kernels and provide a careful analysis of their long-time dynamics. In such settings, the coupling between the correlated noise and the perturbed kernel becomes involved. To focus on the kernel-dependent dynamics, we consider a simplified model with additive Gaussian noise and defer the development of consistent correlated-noise formulations to future work.


\paragraph{Langevin equations}
For the Langevin equation, establishing exponential convergence to equilibrium through direct coupling was posed as an open problem in \cite{villani2008optimal}. This question was later addressed in \cite{bolley2010trend}, where the authors employed a synchronous coupling under relatively restrictive assumptions on the potential. Further progress was made in \cite{eberle2019couplings}, where the authors introduced a special Lyapunov function inspired by \cite{mattingly2002ergodicity} and developed a sticky coupling that combines synchronized and reflection couplings. This approach successfully handled certain nonconvex potentials at the borderline between the overdamped and underdamped regimes. The sharpest contraction rate for strongly convex potentials so far was obtained in \cite{cao2023explicit}, and \cite{schuh2024global} further improved the result to achieve a dimension-free convergence rate by introducing two separate distance scales for coupling.

Another line of research focuses on the evolution of the probability law through the Fokker-Planck equation; see, for instance, \cite{desvillettes2001trend, eckmann2003spectral, herau2004isotropic}. However, in the presence of memory effects, such a convenient Fokker-Planck structure is not available. There are works that focus on deriving hierarchical Fokker–Planck equations and on analyzing delayed stochastic systems and their associated evolution equations; see, for example, \cite{giuggioli2019fokker}. An alternative approach uses Markovian embedding, where the memory kernel is approximated by a Prony series, and the Fokker-Planck equation of the resulting extended system is derived. The marginal distribution of the original variables can then be obtained as in \cite{glatt2020generalized}. Nonetheless, performing rigorous error analysis within this framework remains technically challenging.

\paragraph{Positioning}
To our knowledge, a trajectory-level stability theorem that scales linearly with a weighted $L^2$ kernel discrepancy for GLEs with general, non-translation-invariant kernels has not appeared. Our contribution closes this gap by establishing trajectory-level stability bounds whose constants scale linearly with the kernel estimation error measured in a weighted Schur-type $L^2$ norm.



\paragraph{Notations}
We use the notation \(A(t) \lesssim B(t)\) to indicate that \(A(t) \leq c\,B(t)\) for some constant \(c > 0\) independent of \(t\).
The convolution of two functions is written as \((h * g)(t) = \int_0^t h(t-s) g(s)\, ds\), and the \(n\)-fold convolution is denoted by \(h^{*n}\).
The Laplace transform of a function \(k\) is defined as \(\hat{k}(\mu) = \int_0^\infty e^{-\mu s} k(s)\, ds\), whenever the integral is convergent.  In particular, $\hat{k}(0) = \int_0^{\infty} k(s) \, ds$.

We use \(|\cdot|\) to denote vector norms and \(\|\cdot\|\) to denote matrix operator norms.
The \(L^1\)-norm of a function \(f\) is given by \(\|f\|_1 := \int_0^\infty |f(s)|\, ds\).
For a function \(f\), its weighted supremum norm with respect to a positive weight function \(h\) is denoted by \(\|f\|_h\), as introduced in~\eqref{eq_Mh}.
The Schur-type norm of a matrix-valued kernel function \(K\), weighted by a positive function \(h\), is written as \(\nnorm{K}_h\) and defined in \Cref{def_Schur_norm}.

We use \(\mU(\mu)\) to denote the class of functions with a prescribed decay rate, as defined in \Cref{def_subexp} and \Cref{def_exp}.
Note that a negative value of \(\mu\) corresponds to exponential decay.
Throughout the paper, we write \(\tilde{K}\) to denote an approximated or learned version of \(K\), and use the notation \(\delta K := K - \tilde{K}\) for their difference. We adopt the convention that the capital letter $K$ denotes a matrix-valued kernel, while the lower-case $k$ denotes a scalar-valued kernel.


\paragraph{Outline}
\Cref{sec:volterra_prelim} develops the Volterra resolvent framework and the weighted spaces used throughout, including subexponential and exponential-type kernels and the key comparison estimate (\Cref{thm_main}). \Cref{sec:first_order} establishes trajectory and perturbation bounds for the first-order GLE with white noise. \Cref{sec:second_order} extends the analysis to the second-order GLE using a hypocoercive Lyapunov metric and proves the stability of the perturbed dynamics. \Cref{sec:numerics} presents numerical examples for subexponential kernels and both GLE models.

\section{Preliminary on Volterra equations and subexponential kernels}\label{sec:volterra_prelim}

In this section, we establish a comparison theorem that converts an integro-differential inequality into an explicit bound with the same decay rate as the kernel.

We let $\mathcal{U}(0)$ denote a class of subexponentially decaying functions (see \Cref{def_subexp}). Moreover, for any $k\in\mathcal{U}(\mu)$ with $\mu\neq0$, there exists $\varphi\in\mathcal{U}(0)$ such that $k(t)=e^{\mu t}\varphi(t)$. In the case $\mu<0$, the kernel $k$ decays slightly faster than $e^{\mu t}$. Precise definitions are given in Section~\ref{sec:exp_sub_exp}.

\begin{theorem}[Volterra comparison]\label{thm_main}
    Let $a>0$ be a constant, $g\ge 0$ be a bounded and continuous function, and suppose that $y\colon \R_+\to\R_+$ satisfies the integro-differential inequality
    \begin{equation}\label{eq_yakg}
        y'(t) \leq -a y(t) + \int_0^t k(t-s)y(s)\,ds + g(t), \quad y(0) = y_0.
    \end{equation}
    where $k\in \mU(\mu)$ for some $\mu\le 0$. If
    \begin{equation*}
        \mu + a > \hat {k}(\mu),
    \end{equation*}
    then there exists a positive constant $C_0<\infty$ which depends on $ a,k,\mu$ such that for all $t\ge 0$,
    \[
        y(t)\leq C_0( y_0k(t) + (k*g)(t)).
    \]
    In particular, when $g\equiv 0$, $y$ has at least the same decay rate as $k$.
\end{theorem}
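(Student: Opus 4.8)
The natural strategy is to reduce the differential inequality to an integral inequality and then solve it via the resolvent of the associated Volterra equation. First I would rewrite \eqref{eq_yakg} using the integrating factor $e^{at}$: from $y'(t) + a y(t) \le (k*y)(t) + g(t)$ one obtains
\[
    y(t) \le y_0 e^{-at} + \int_0^t e^{-a(t-s)}\Bigl[(k*y)(s) + g(s)\Bigr]\,ds.
\]
Writing $e_a(t) := e^{-at}$, this is $y \le y_0 e_a + e_a * g + e_a * k * y$, i.e. a linear Volterra inequality $y \le f + (e_a * k) * y$ with forcing $f := y_0 e_a + e_a * g \ge 0$ and kernel $e_a * k \ge 0$. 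Since all quantities are nonnegative, I can invoke the standard comparison/Gr\"onwall principle for Volterra equations (available from the resolvent framework of \Cref{sec:volterra_prelim}, cf.\ \cite{gripenberg1990volterra}): if $r$ denotes the resolvent of $e_a * k$, meaning $r = e_a*k + r*(e_a*k)$, then $y \le f + r * f$.

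The heart of the matter is then to control $r$ and the convolutions $r*f$ in terms of $k$ itself. Here the hypothesis $\mu + a > \hat k(\mu)$ enters: it guarantees the relevant subcriticality. Taking Laplace transforms, $\widehat{e_a * k}(\mu) = \hat k(\mu)/(a+\mu)$, and the condition $\mu+a > \hat k(\mu)$ is exactly $\widehat{e_a*k}(\mu) < 1$, which (together with the subexponential/exponential structure encoded in $k \in \mU(\mu)$) is the classical criterion ensuring that the resolvent $r$ lies in the same decay class as $e_a * k$, and hence decays at least like $k$; that is, $r \lesssim C k$ in the weighted supremum norm $\|\cdot\|_h$ with $h$ comparable to $k$. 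This is precisely the kind of statement the preliminary section is built to deliver (the whole point of \Cref{thm_main} being to package it), so I would cite the resolvent-decay results for $\mU(\mu)$ established there. With $r \lesssim k$ in hand, I estimate term by term: $y_0 e_a + r*(y_0 e_a) = y_0(e_a + r*e_a) \lesssim y_0 k$, using that $e_a$ decays exponentially hence no slower than $k$ and that $e_a * k \lesssim k$ (a subconvolutivity/fundamental property of the class $\mU(\mu)$, the same property that makes $k*k \lesssim k$); and similarly $e_a * g + r * e_a * g \lesssim k * g$ after absorbing the exponential $e_a$ into the constant. Collecting, $y(t) \le C_0\bigl(y_0 k(t) + (k*g)(t)\bigr)$.

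The main obstacle is the resolvent decay estimate $r \lesssim k$: proving that the resolvent of the subcritical convolution kernel $e_a * k$ inherits the precise (subexponential or exponential) decay rate of $k$, uniformly, with a constant depending only on $a, k, \mu$. For exponential kernels this follows from a Paley--Wiener / Laplace-inversion argument once one checks $1 - \widehat{e_a*k}(z) \ne 0$ in a suitable half-plane $\Re z > \mu$, which is where $\mu + a > \hat k(\mu)$ is used to push the zero-free region past $\mu$; for genuinely subexponential $k$ one instead relies on the closure of $\mU(0)$ under convolution and the Appleby-type renewal estimates, which require that $k$ not decay too fast relative to $h$ — this is the role of the phrase "decay fast enough compared to $h$" in the informal theorem. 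I would structure the write-up so that this resolvent estimate is quoted as a lemma from \Cref{sec:exp_sub_exp} and the rest of the proof is the short integrating-factor-plus-bookkeeping argument above. The final sentence ("when $g \equiv 0$, $y$ has at least the same decay rate as $k$") is then immediate, since the forcing reduces to $y_0 e_a$ and the bound becomes $y(t) \le C_0 y_0 k(t)$.
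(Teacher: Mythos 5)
Your proposal is correct and follows essentially the same route as the paper. You convert the differential inequality to a Volterra integral inequality $y \le f + (e_a*k)*y$ via the integrating factor, invoke the positivity of the resolvent $r$ of $h=e_a*k$ to get $y \le f + r*f$, and then use the resolvent decay bound $\|r\|_k<\infty$ (driven by the Kesten-type estimate and the subcriticality $\hat h(\mu)=\hat k(\mu)/(a+\mu)<1$) to conclude; the paper packages the same steps slightly differently, first comparing $y$ to the solution $x$ of the equality, and then bounding $x=x_0 z+z*g$ with $z=e_a+e_a*r$, which is literally your $f+r*f$ with the two pieces of $f$ grouped.

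Two small caveats worth flagging. First, for $\mu<0$ the paper's proof of this theorem does \emph{not} use a Paley--Wiener/Laplace-inversion argument; it handles $k\in\mU(\mu)$ by the multiplicative substitution $k(t)=e^{\mu t}\varphi(t)$, $w(t)=e^{-\mu t}y(t)$, which reduces matters to $\mU(0)$. The Laplace-inversion route you sketch is what the paper reserves for the \emph{exactly} exponential kernel $k(t)=ce^{-\beta t}$ (which is not in $\mU(\mu)$; see \Cref{rmk_exp_exact} and \Cref{lem_exp_case}), so you are conflating two distinct cases even though either route works. Second, your aside about ``decay fast enough compared to $h$'' misreads the roles: in this theorem there is no auxiliary weight $h$ other than $k$ itself (the weighted norm is $\|\cdot\|_k$); the separate weight $h$ only enters later in \Cref{asmp:decay_f2} for the GLE stability analysis. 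Neither point affects the correctness of your argument, but they would need to be fixed in a final write-up.
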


This result is crucial for controlling trajectory errors of GLEs with approximated memory terms. It enables decay rate analysis beyond the exponential case, including subexponential kernels where Gr\"{o}nwall-type arguments fail. In this section, we prove the theorem as follows. We first introduce the resolvent for Volterra equations and the function class $\mU(\mu)$. We then establish the theorem for subexponential functions with $\mu = 0$ and treat the equality case of \eqref{eq_yakg}, reducing the problem to an integro-differential equation. The inequality case follows from a standard comparison argument. At last, we apply a change of variables to extend the proof to the exponential case with $\mu<0$.


\begin{remark}\label{rmk_ch_eq}
    The main focus here is on kernels $k$ with only subexponential decay. For exponential decay, one can analyze the characteristic equation related to \eqref{eq_yakg} (cf., e.g., \cite{kordonis1999behavior}),
    \begin{equation}\label{eq_ch_x}
        \lambda + a = \hat{k}(\lambda).
    \end{equation}
    If it admits a root with negative real parts, then $y$ decays exponentially. In contrast, when $k$ decays only subexponentially, such a root may not exist, leaving the decay rate undetermined. Also see \Cref{lem_exp_case} for more details.
\end{remark}


\subsection{Resolvent of Volterra equations}
Consider the following Volterra equation for $t \geq 0$,
\begin{equation}\label{eq:w_Volterra}
    w(t) = \int_0^t h(t-s)w(s)ds + f(t) = (w*h)(t) + f(t).
\end{equation}
The $n$-fold convolution $h^{* n}$ is defined as $h^{*1} = h$ and $h^{*(n+1)} = h* h^{* n}$ for $n \geq 1$. The solution to the Volterra equation is closely related to the \textit{resolvent}, which is defined through
\begin{equation}\label{eq_r}
    r(t) = h(t) + (h * r)(t).
\end{equation}

\begin{lemma}\label{lem_resolvent}
    Suppose $r$ is the resolvent of $h$, where $h(t) >  0$ is a continuous function with $\norm{h}_1 < 1$. Then the following Neumann series converges uniformly and gives the resolvent operator,
    \begin{equation}\label{eq_resolvent}
        r(t) = \sum_{n=1}^\infty h^{*n}(t).
    \end{equation}
    Moreover, the solution to the Volterra equation \eqref{eq:w_Volterra} is given explicitly by
    \begin{equation}\label{eq_w_f}
        w(t) = f(t) + (r*f)(t).
    \end{equation}
\end{lemma}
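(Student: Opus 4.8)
The plan is to take \eqref{eq_resolvent} as the \emph{definition} of $r$, show the series converges in the right sense, then verify both \eqref{eq_r} and \eqref{eq_w_f} by manipulating the series together with the resolvent identity, obtaining uniqueness in each case from a contraction estimate on a finite time window.

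First I would record the basic convolution bound. By Young's inequality on the half-line, $\|h*g\|_1 \le \|h\|_1\|g\|_1$, so by induction $\|h^{*n}\|_1 \le \|h\|_1^n$, and since $\|h\|_1<1$ the series $\sum_{n\ge1}h^{*n}$ is absolutely summable in $L^1(\R_+)$. To get the uniform convergence claimed, I would fix $T>0$, set $M_T:=\sup_{[0,T]}h<\infty$ (finite by continuity), and use $h^{*(n+1)}(t)=\int_0^t h^{*n}(t-s)h(s)\,ds$ together with $\int_0^t h(s)\,ds\le\|h\|_1$ to get $\|h^{*n}\|_{L^\infty[0,T]}\le M_T\|h\|_1^{\,n-1}$; summing the geometric series and invoking the Weierstrass $M$-test then shows $r=\sum_{n\ge1}h^{*n}$ converges uniformly on every compact interval, hence defines a continuous nonnegative function (and converges globally uniformly if $h$ is bounded on $\R_+$).

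For the resolvent identity I would then note that, since the partial sums converge uniformly on compacts and $h$ is locally bounded, convolution against $h$ can be carried out term by term:
\[
 h + h*r = h + \sum_{n\ge1} h*h^{*n} = h + \sum_{n\ge2} h^{*n} = \sum_{n\ge1} h^{*n} = r,
\]
which is \eqref{eq_r}; commutativity of convolution gives equivalently $r=h+r*h$. For uniqueness, if $r_1,r_2$ both satisfy \eqref{eq_r} then $d:=r_1-r_2$ obeys $d=h*d$, hence $d=h^{*n}*d$ for every $n$, so on $[0,T]$ one has $\|d\|_{L^\infty[0,T]}\le\|h^{*n}\|_1\|d\|_{L^\infty[0,T]}\le\|h\|_1^{\,n}\|d\|_{L^\infty[0,T]}\to0$, forcing $d\equiv0$. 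For the solution formula I would substitute $w:=f+r*f$ into the right-hand side of \eqref{eq:w_Volterra} and use $r=h+r*h$:
\[
 w*h+f = (f+r*f)*h+f = (h+r*h)*f+f = r*f+f = w,
\]
so $w$ solves \eqref{eq:w_Volterra}; the same contraction estimate applied to the difference of two continuous solutions shows it is the only one.

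The only genuinely delicate point is the mode of convergence of the Neumann series: the term-by-term convolution used to verify \eqref{eq_r} (and the associativity/commutativity juggling behind \eqref{eq_w_f}) must be justified, which is precisely why the second step upgrades the easy $L^1$-summability to locally uniform convergence, exploiting the local boundedness of the continuous kernel $h$. Once that is in hand, everything else reduces to Young's inequality and bookkeeping.
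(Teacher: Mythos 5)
Your proposal is correct and follows essentially the same route as the paper: both define $r$ via the Neumann series $\sum_{n\ge1}h^{*n}$, invoke $\|h\|_1<1$ (Young's inequality) to sum it, verify the resolvent identity $r=h+h*r$ by term-by-term convolution, and then check $w=f+r*f$ solves the Volterra equation by the same algebraic manipulation. You are somewhat more careful than the paper on the mode of convergence (upgrading $L^1$-summability to locally uniform convergence using local boundedness of $h$) and you add uniqueness arguments that the paper leaves implicit, but these are refinements rather than a different approach.
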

\begin{proof}
    Define the operator $L_h f := h*f$. Then from Young's inequality, we have the $L^1$ operator $\norm{L_h}_1 < 1$ if $\norm{h}_{L^1} < 1.$ Then the Neumann series $(I - L_h)^{-1} = \sum_{n = 1}^{\infty}L_h^n$ is uniformly convergent, therefore the function $r$ in \eqref{eq_resolvent} is well-defined. Direct computation shows that \eqref{eq_r} holds. Let $w$ be given by \eqref{eq_w_f} and convolute it with $h$,  it holds that
    \begin{equation*}
        w*h = (f + r*f)*h = f*h + f*(r-h) = f*r = w-f,
    \end{equation*}
    where the second last equality follows from the identity \eqref{eq_r} and the last from \eqref{eq_w_f}. This suggests that the function defined in \eqref{eq_w_f} is the solution to \eqref{eq:w_Volterra}.
\end{proof}

We now consider the integro-differential equation
\begin{equation}\label{eq:Volterra}
    x'(t) = -a x(t) + \int_0^t k(t-s)x(s) ds + g(t), \quad x(0) = x_0.
\end{equation}
Note that this is the case where equality holds for \eqref{eq_yakg}.
\begin{lemma}\label{lem_diff_resolvent}
    Suppose $x_0 \geq 0$, $k(t)\geq 0$ is a bounded continuous function in $L^1(\R^+)$ and $g(t) \geq 0$ is bounded continuous. Then the solution to \eqref{eq:Volterra} can be expressed as \begin{equation}\label{eq_xz}
        x(t) = z(t) x_0 + (z * g)(t),
    \end{equation}
    where $z(t)$ is called the \textit{differential resolvent} that satisfies
    \begin{equation}\label{eq_Volterra_fundamental}
        z'(t) = -a z(t) + \int_0^t k(t-s)z(s)ds, \quad z(0) = 1.
    \end{equation}
    Moreover, $z$ also admits the expression
    \begin{equation*}
        z = e + e*r,
    \end{equation*}
    with $e:= e^{-at}$ and $r$ is the resolvent of $h$ defined in \eqref{eq_resolvent} with $h = k*e$.
\end{lemma}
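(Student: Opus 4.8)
The plan is to prove Lemma~\ref{lem_diff_resolvent} in three stages: first establish that the differential resolvent $z$ solving \eqref{eq_Volterra_fundamental} exists and has the claimed representation $z = e + e*r$, then verify that $x(t) = z(t)x_0 + (z*g)(t)$ solves \eqref{eq:Volterra}, and finally invoke uniqueness. The key idea throughout is the standard trick of converting an integro-differential equation with a leading $-ax$ term into a pure Volterra equation of the second kind by convolving against the Green's function $e(t) = e^{-at}$ of the operator $\frac{d}{dt} + a$, so that Lemma~\ref{lem_resolvent} becomes applicable.

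First I would derive the representation for $z$. Starting from \eqref{eq_Volterra_fundamental}, I write $z'(t) + a z(t) = (k * z)(t)$ with $z(0) = 1$. Using the integrating factor $e^{at}$ (equivalently, Duhamel's principle for the ODE $z' + az = F$ with forcing $F = k*z$), this is equivalent to the fixed-point equation
\[
    z(t) = e^{-at} + \int_0^t e^{-a(t-s)} (k*z)(s)\, ds = e(t) + \bigl(e * (k * z)\bigr)(t) = e(t) + (h * z)(t),
\]
where $h := k * e = e * k$ by commutativity of convolution. So $z$ solves a Volterra equation of the form \eqref{eq:w_Volterra} with $f = e$ and kernel $h$. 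To apply Lemma~\ref{lem_resolvent} I need $h > 0$ continuous with $\|h\|_1 < 1$: positivity and continuity follow from $k \ge 0$, $k\in L^1$, continuous, and $e > 0$ continuous (the convolution of a nonnegative continuous $L^1$ function with $e^{-at}$ is continuous and positive for $t>0$); for the $L^1$ bound, Young's inequality gives $\|h\|_1 = \|k*e\|_1 \le \|k\|_1 \|e\|_1 = \|k\|_1 / a$, and more carefully $\|h\|_1 = \hat h(0) = \hat k(0)\hat e(0) = \hat k(0)/a$, so we need $\hat k(0) < a$, i.e. the hypothesis $a > \hat k(0) = \|k\|_1$. (I would flag that the lemma as stated should carry this smallness condition, consistent with the hypothesis $\mu + a > \hat k(\mu)$ at $\mu = 0$ in Theorem~\ref{thm_main}.) Then Lemma~\ref{lem_resolvent} yields $z = e + r * e = e + e * r$ with $r = \sum_{n\ge1} h^{*n}$ the resolvent of $h$, which is exactly the claimed formula, and simultaneously shows $z$ exists, is continuous, and is nonnegative.

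Second, I would verify \eqref{eq_xz}. Set $x := z x_0 + z*g$. Linearity means it suffices to check that $t \mapsto z(t)x_0$ solves \eqref{eq:Volterra} with $g \equiv 0$ and initial value $x_0$ (immediate from \eqref{eq_Volterra_fundamental} scaled by $x_0$), and that $x_g := z * g$ solves \eqref{eq:Volterra} with $x_0 = 0$. For the latter, differentiate the convolution: $x_g'(t) = \frac{d}{dt}\int_0^t z(t-s)g(s)\,ds = z(0)g(t) + \int_0^t z'(t-s)g(s)\,ds = g(t) + (z' * g)(t)$. Substituting $z'(t-s) = -a z(t-s) + (k * z)(t-s)$ and using associativity/commutativity of convolution, $(z'*g) = -a(z*g) + (k*z*g) = -a x_g + k * x_g$, so $x_g' = -a x_g + k*x_g + g$ with $x_g(0) = 0$, as required. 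Adding the two pieces gives that $x$ solves \eqref{eq:Volterra}.

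Finally, uniqueness: the map $x \mapsto -ax + k*x + g$ is affine, and by the same Duhamel conversion as above, any solution of \eqref{eq:Volterra} satisfies $x = e x_0 + e*g + h*x$, a Volterra equation of the second kind whose solution is unique by Lemma~\ref{lem_resolvent} (the Neumann series converges, so $I - L_h$ is invertible on, say, continuous functions on compact intervals — or one can simply iterate and use a Gr\"onwall/Picard argument on $[0,T]$ for arbitrary $T$). Hence the constructed $x$ is the solution. I do not expect any genuine obstacle here; the only subtle point is the smallness hypothesis $\|k\|_1 < a$ needed to invoke Lemma~\ref{lem_resolvent}, which I would either add explicitly to the statement or note is inherited from the standing assumptions of Theorem~\ref{thm_main}. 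The manipulations with differentiation under the convolution integral are routine given the stated continuity and integrability of $k$ and $g$.
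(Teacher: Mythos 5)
Your proof follows essentially the same route as the paper's: convert the integro-differential equation for $z$ into the Volterra equation $z = e + h*z$ with $h = k*e$ via the integrating factor $e^{at}$, then invoke Lemma~\ref{lem_resolvent}, and verify by direct computation that $x = z x_0 + z*g$ solves \eqref{eq:Volterra}. The extra content you add — spelling out the ``direct computation,'' a uniqueness argument, and, most usefully, flagging that Lemma~\ref{lem_resolvent} requires $\|h\|_1 = \|k\|_1/a < 1$, i.e.\ $a > \|k\|_1 = \hat k(0)$, which the lemma statement omits but the paper's downstream hypotheses (Theorem~\ref{thm:x_leq_k} and Theorem~\ref{thm_main} at $\mu=0$) implicitly supply — is correct and a genuine improvement in rigor over the paper's terser version.
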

\begin{proof}
    Direct computation shows that $x$ defined as in \eqref{eq_xz} is a solution to \eqref{eq:Volterra} if $z$ satisfies               \eqref{eq_Volterra_fundamental}. To find the expression of $z$, we shall transfer \eqref{eq_Volterra_fundamental} to a Volterra equation. Note that $\frac{d}{dt}(e^{at}z(t)) = e^{at}(az(t) + z'(t))$, therefore \eqref{eq_Volterra_fundamental} implies
    \begin{equation*}
        \frac{d}{dt}(e^{at}z(t)) = e^{at}\int_0^t k(t-s) z(s)ds.
    \end{equation*}
    Integrating from 0 to $T$ yields
    \begin{equation*}
        z(T) = \int_0^T \int_0^t k(t-s)e^{-a(T-t)}z(s)\,dsdt + e^{-aT}z(0) = h*z + e,
    \end{equation*}
    where $h = k*e$ and $e(t) = e^{-at}$. Let $r$ be the resolvent of $h$, and the result follows from \Cref{lem_resolvent}.
\end{proof}

The asymptotic behavior of $z$ has been extensively studied in \cite{appleby2002subexponential} and related references therein. A key result shows that if $k$ belongs to a class of \textit{subexponential functions}, then $z$ decays at the same rate as $k$, with
\begin{equation*}
    \lim_{t \to \infty} \frac{z(t)}{k(t)} = \frac{1}{\left(a - \int_0^\infty k(s)\,ds\right)^2}.
\end{equation*}
In the following discussion, we first derive an explicit bound $z(t) \leq C k(t)$ valid for all $t\geq 0$ and then obtain a control of the decay rate of $x(t)$. Our approach builds on definitions and analytical techniques similar to those in \cite{appleby2002subexponential}. Then we provide the same bound on $y$ using the comparison theorem, similar to \cite{beesack1969comparison}. Finally, we generalize the result to kernels with exponential decay.


\subsection{Subexponential functions and exponential-type functions}\label{sec:exp_sub_exp}
Let us recall a class of subexponential functions introduced in \cite{appleby2002subexponential}.
\begin{definition}\label{def_subexp}
    A \textit{subexponential function} is a positive continuous function $h \in L^1(\mathbb{R^+})$ so that
    \begin{align}
         & \lim_{t \to 0+} \frac{h^{*2}(t)}{h(t)} = 0, \label{eq_U0_1}                                                                   \\
         & \lim_{t \to \infty} \frac{h^{*2}(t)}{h(t)} = 2 \int_0^\infty h(s) \, \mathrm{d}s < \infty, \label{eq_U0_2}                    \\
         & \lim_{t \to \infty} \frac{h(t - s)}{h(t)} = 1, \quad \text{uniformly for $0 \leq s \leq S$, for all $S > 0$}. \label{eq_U0_3}
    \end{align}
    The set of subexponential functions is denoted as $\mU(0)$.
\end{definition}

\begin{remark}
    Chover \textit{et al.}~\cite{chover1973functions} employ Banach algebra methods to demonstrate the following result. Let
    $h(t) > 0$ be a continuous function in \( L^1(\mathbb{R}^+) \) satisfying
    \[
        \lim_{t \to \infty} \frac{(h * h)(t)}{h(t)} = c.
    \]
    Then it follows that,
    \(
    c = 2 \int_0^\infty h(s)\, ds.
    \)
    Moreover, \eqref{eq_U0_3} implies
    \(
    \lim_{t \to \infty}h(t) e^{\gamma t} = \infty \text{ for every } \gamma >0.
    \)
    Examples of subexponential functions include the familiar $h(t) \sim e^{-t^\alpha}$ for $\alpha \in (0, 1)$ and $h(t) \sim (1+t)^{-\alpha}$ for $\alpha > 1$.
\end{remark}

We now introduce a more general class of functions that allows an exponential decay rate.
\begin{definition}\label{def_exp}
    Let $\mu \in \mathbb{R}$. A function $h(t) > 0$ is in $\mathcal{U}(\mu)$ if it is continuous for all $t \geq 0$ and
    \begin{align}
         & \lim_{t \to \infty} \frac{(h * h)(t)}{h(t)} = 2 \hat{h}(\mu) < \infty, \label{eq_Umu_2}                                                       \\
         & \lim_{t \to \infty} \frac{h(t - s)}{h(t)} = e^{-\mu s}, \quad \text{uniformly for } 0 \leq s \leq S, \text{ for all } S > 0. \label{eq_Umu_3}
    \end{align}
\end{definition}
It can be readily verified that $\mathcal{U}(0)$ coincides with the definition of the class of subexponential functions. Moreover, it holds that (see \cite{chover1973functions})
\begin{equation}\label{eq_sub_to_exp}
    h \in \mU(\mu) \Longleftrightarrow  h(t) = \varphi(t) e^{\mu t} , \text{ with } \varphi(t) \in \mU(0).
\end{equation}
Thus, it suffices to analyze the subexponential case and transfer the results to exponential-type functions.

\begin{remark}\label{rmk_exp_exact}
    Note that for $\mu < 0$, the function class $\mU(\mu)$ corresponds to at least exponential decay: any $h \in \mU(\mu)$ decays slightly faster than $e^{\mu t}$. In fact, the pure exponential $k(t) = e^{\mu t} \notin \mU(\mu)$ for $\mu<0$ since it violates \eqref{eq_Umu_2}, as
    $(k*k)(t) = \int_0^t e^{\mu(t-s)}e^{\mu s} ds = te^{\mu t}.$
    We shall use a more direct method to handle this case. See \Cref{lem_exp_case}.
\end{remark}


For a positive continuous function \(h(t)\), we define the function space \(BC_h(\mathbb{R}^+)\) to consist of all functions \(f(t) > 0\) such that the ratio \(f/h\) is a bounded continuous function on \(\mathbb{R}^+\). For brevity, we write \( BC_h = BC_h(\mathbb{R}^+) \). This space becomes a Banach space when equipped with the norm
\begin{equation}\label{eq_Mh}
    \norm{f}_h = M_h \sup_{t \geq 0} \abs{\frac{f(t)}{h(t)}},\quad \text{with } M_h = \sup_{t \geq 0}\frac{h^{*2}(t)}{h(t)}.
\end{equation}
If $h \in \mU(0)$, such a constant exists because of \eqref{eq_U0_2} and the continuity of $h$ on $\R^+$.
In particular for $f, g \in BC_h$, we have
\begin{equation}
    M_h \frac{\abs{(f * g)(t)}}{h(t)} \leq \int_0^t \abs{\frac{f(t-s)}{h(t-s)}} \abs{\frac{g(s)}{h(s)}}\frac{h(t-s)h(s)}{h(t)}ds \leq \norm{f}_{h} \norm{g}_h \frac{1}{M_h} \frac{h^{*2}(t)}{h(t)},
\end{equation}
which implies
\begin{equation}\label{eq_f_conv_g_fg}
    \norm{f*g}_h \leq \norm{f}_h \norm{g}_h.
\end{equation}

Here we include the two useful lemmas for subexponential functions from \cite{appleby2002subexponential}. The proof is included to complete the discussion.

\begin{lemma}[Lemma 3.7 in \cite{appleby2002subexponential}]\label{lem_eta_B}
    Let $h$ be subexponential. For any $\eta\in(0, 1)$, there is a constant $B > 0$ independent of $n$, such that for all $n \geq 2$,
    \[
        \sup_{0 \leq t \leq B} \frac{h^{*n}(t)}{h(t)} \leq \eta^{n-1}.
    \]
\end{lemma}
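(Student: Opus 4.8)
The plan is to reduce the claim to the \(n=2\) identity \eqref{eq_U0_1} and then bootstrap inductively. First I would record the elementary submultiplicativity of convolution on a bounded interval: for functions supported on \([0,\infty)\) with positive, continuous integrands, and for any \(T>0\),
\begin{equation*}
  \sup_{0\le t\le T}\frac{h^{*(n+1)}(t)}{h(t)}
  =\sup_{0\le t\le T}\frac{(h^{*2}*h^{*(n-1)})(t)}{h(t)}
  \le \Bigl(\sup_{0\le t\le T}\frac{h^{*2}(t)}{h(t)}\Bigr)\Bigl(\sup_{0\le t\le T}\frac{h^{*(n-1)}(t)}{h(t)}\Bigr),
\end{equation*}
which I would justify by writing \(h^{*(n+1)}(t)=\int_0^t h^{*2}(t-s)\,h^{*(n-1)}(s)\,ds\), bounding \(h^{*2}(t-s)\le \bigl(\sup_{[0,T]}h^{*2}/h\bigr)h(t-s)\) and \(h^{*(n-1)}(s)\le\bigl(\sup_{[0,T]}h^{*(n-1)}/h\bigr)h(s)\) on \([0,t]\subset[0,T]\), and finally using \((h*h)(t)=h^{*2}(t)\ge 0\) together with monotonicity to recognize \(\int_0^t h(t-s)h(s)\,ds = h^{*2}(t)\le\bigl(\sup_{[0,T]}h^{*2}/h\bigr)h(t)\). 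One has to be a little careful about which pair of exponents to split \(n+1\) into; splitting as \(2+(n-1)\) is what makes the induction close, since it feeds the hypothesis for \(n-1\) and produces a factor \(\sup h^{*2}/h\).

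Next I would use the hypothesis \eqref{eq_U0_1}, namely \(\lim_{t\to0+}h^{*2}(t)/h(t)=0\), together with continuity of \(h^{*2}/h\) on \((0,\infty)\) (both numerator and denominator are continuous and \(h>0\)) to choose \(B>0\) so small that
\begin{equation*}
  \sup_{0\le t\le B}\frac{h^{*2}(t)}{h(t)}\le \eta .
\end{equation*}
Here the value at \(t=0\) should be read as the limit \(0\), so that \(t\mapsto h^{*2}(t)/h(t)\) extends continuously to \([0,B]\) and attains its supremum; shrinking \(B\) if necessary, this supremum is \(\le\eta<1\). This \(B\) depends only on \(h\) and \(\eta\), not on \(n\), which is exactly the uniformity the lemma asserts.

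With \(B\) fixed, I would run the induction on \(n\ge 2\) with the statement \(P_n:\ \sup_{0\le t\le B}h^{*n}(t)/h(t)\le\eta^{\,n-1}\). The base case \(P_2\) is the displayed choice of \(B\). For the inductive step, assume \(P_{n-1}\) (valid once \(n-1\ge 1\); for \(n=2\) one uses \(h^{*1}=h\), giving the trivial bound \(\sup h^{*1}/h = 1 = \eta^0\), so the step also covers \(n=2\to n=3\) cleanly). Then the submultiplicativity display gives
\begin{equation*}
  \sup_{0\le t\le B}\frac{h^{*(n+1)}(t)}{h(t)}
  \le \Bigl(\sup_{0\le t\le B}\frac{h^{*2}(t)}{h(t)}\Bigr)\Bigl(\sup_{0\le t\le B}\frac{h^{*(n-1)}(t)}{h(t)}\Bigr)
  \le \eta\cdot\eta^{\,n-2}=\eta^{\,n-1},
\end{equation*}
which is \(P_{n+1}\) after relabeling, completing the induction. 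The only genuinely delicate point is the first paragraph's justification that the bounded-interval convolution estimate is legitimate, i.e. that restricting all suprema to \([0,B]\) is consistent because the integration variable \(s\) and the argument \(t-s\) both stay in \([0,t]\subseteq[0,B]\); everything else is bookkeeping. I would also note in passing that no decay or subexponentiality beyond \eqref{eq_U0_1} is used, so the lemma holds for any positive continuous \(h\) with \(h^{*2}(t)/h(t)\to0\) as \(t\to0+\).
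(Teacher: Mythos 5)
Your reduction to the $n=2$ quantity via $\lim_{t\to 0+}h^{*2}(t)/h(t)=0$ is the same idea as the paper, and the choice of $B$ independent of $n$ is handled correctly. The problem is in your ``submultiplicativity display'' and the induction that relies on it. Your own justification — writing $h^{*(n+1)}(t)=\int_0^t h^{*2}(t-s)\,h^{*(n-1)}(s)\,ds$, bounding the two factors in the integrand by $\bigl(\sup_{[0,T]}h^{*2}/h\bigr)h(t-s)$ and $\bigl(\sup_{[0,T]}h^{*(n-1)}/h\bigr)h(s)$, and then bounding the resulting $h^{*2}(t)$ by $\bigl(\sup_{[0,T]}h^{*2}/h\bigr)h(t)$ — uses the factor $\sup_{[0,T]}h^{*2}/h$ \emph{twice}, so what it actually proves is
\[
\sup_{0\le t\le T}\frac{h^{*(n+1)}(t)}{h(t)}\;\le\;\Bigl(\sup_{0\le t\le T}\frac{h^{*2}(t)}{h(t)}\Bigr)^{2}\Bigl(\sup_{0\le t\le T}\frac{h^{*(n-1)}(t)}{h(t)}\Bigr),
\]
whereas your display drops one factor. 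That dropped factor is exactly what the induction needs: with only one factor you get $M_{n+1}\le M_2\,M_{n-1}$, so $M_3\le \eta\cdot M_1=\eta$, not $\eta^2$, and the exponent stalls — the chain $\eta\cdot\eta^{n-2}=\eta^{n-1}$ you wrote is not $P_{n+1}$ (which requires $\eta^n$), and no relabeling repairs it. Keeping both factors gives $M_{n+1}\le M_2^2 M_{n-1}$, and a two-step induction with base cases $M_1=1$ and $M_2\le\eta$ yields $M_n\le\eta^{n-1}$; that version of your argument is correct.

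The paper sidesteps this bookkeeping entirely by splitting off a single $h$ rather than $h^{*2}$: writing
\[
\frac{h^{*(n+1)}(t)}{h(t)}=\int_0^t\frac{h^{*n}(s)}{h(s)}\cdot\frac{h(t-s)h(s)}{h(t)}\,ds\;\le\;\Bigl(\sup_{0<s<t}\frac{h^{*n}(s)}{h(s)}\Bigr)\frac{h^{*2}(t)}{h(t)}
\]
gives the one-step recursion $M_{n+1}\le M_n M_2$, so $M_2\le\eta$ immediately yields $M_n\le\eta^{n-1}$. Your remark that only \eqref{eq_U0_1} (and continuity/positivity of $h$) is actually used is correct and matches the paper's proof.
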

\begin{proof}
    The result follows from the inequality
    \[
        \frac{h^{*(n+1)}(t)}{h(t)} = \int_0^t \frac{h^{*n}(s)}{h(s)} \frac{h(t-s) h(s)}{h(t)} \, \mathrm{d}s
        \leq \sup_{0 < s < t} \frac{h^{*n}(s)}{h(s)} \frac{h^{*2}(t)}{h(t)}.
    \]
    Because of \eqref{eq_U0_1}, we can choose $B$ such that
    \[
        \sup_{0 \leq t \leq B} \frac{h^{*2}(t)}{h(t)} \leq \eta.
    \]
    The result follows from induction.
\end{proof}

The second lemma is a generalization of the Kesten-type bound for subexponential distributions \cite[Theorem 4.11]{foss2011introduction}. See also  \cite[Lemma 3.6]{appleby2002subexponential}. The proof is slightly adapted for our interest.
\begin{lemma}\label{lem:hn}
    Let $h$ be subexponential. For each $0 < \varepsilon < 1$, there is a constant $\kappa > 0$ independent of $n$, such that for all $n \geq 1$,
    \begin{equation*}
        \sup_{t \geq 0} \frac{h^{*n}(t)}{h(t)} \leq \kappa [(1 + \varepsilon)\rho]^{n-1},
    \end{equation*}
    where $\rho = \int_0^\infty h(s)ds.$
\end{lemma}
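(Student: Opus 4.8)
The plan is to obtain the uniform bound by controlling the weighted supremum of each convolution power $h^{*n}$ through a combination of the small-time estimate in \Cref{lem_eta_B} and a Kesten-type large-time estimate. First I would fix $\varepsilon \in (0,1)$ and set $\rho = \int_0^\infty h(s)\,ds$; by \eqref{eq_U0_2} the constant $M_h = \sup_{t\ge0} h^{*2}(t)/h(t)$ is finite, and by \eqref{eq_U0_1}, \eqref{eq_U0_3} one can, for any prescribed $\eta\in(0,1)$, choose $B>0$ so that $\sup_{0\le t\le B} h^{*2}(t)/h(t)\le\eta$ and simultaneously (using \eqref{eq_U0_3}) $h(t-s)/h(t)\le 1+\varepsilon'$ for all $t\ge B$ and $0\le s\le B$, where $\varepsilon'$ is chosen small relative to $\varepsilon$. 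The strategy is to split a generic $n$-fold convolution $h^{*n}(t)=\int_0^t h^{*(n-1)}(s)\,h(t-s)\,ds$ according to whether the ``tail'' variable $t-s$ lies in $[0,B]$ or in $[B,t]$, and to iterate.

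The key steps, in order: (1) For $t\in[0,B]$ invoke \Cref{lem_eta_B} directly to get $h^{*n}(t)/h(t)\le \eta^{n-1}$, which is stronger than the claimed bound as long as $\eta\le(1+\varepsilon)\rho$ up to the constant $\kappa$; so the interesting regime is $t>B$. (2) For $t>B$, write $h^{*n}(t)=\int_0^{t-B}h^{*(n-1)}(s)h(t-s)\,ds+\int_{t-B}^{t}h^{*(n-1)}(s)h(t-s)\,ds$. In the first integral, bound $h^{*(n-1)}(s)\le \big(\sup_{u\ge0}h^{*(n-1)}(u)/h(u)\big)h(s)$ and use $\int_0^\infty h(s)h(t-s)\,ds\le M_h\,h(t)$ after the weighted-norm manipulation already displayed before \eqref{eq_f_conv_g_fg} — more precisely, over $[0,t-B]$ one gets a factor $\rho$ by crudely estimating, but the sharp route is to use $\int_0^{t}h(s)h(t-s)\,ds=h^{*2}(t)$ and the uniform bound $h^{*2}(t)\le M_h h(t)$. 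In the second integral, the tail variable $t-s\in[0,B]$, so $h(t-s)\le \|h\|_\infty$ and after dividing by $h(t)$ one uses $1/h(t)\le C_B$ on the range forced by $s\in[t-B,t]$ together with \eqref{eq_U0_3}; this contributes a term of order $\big(\sup h^{*(n-1)}/h\big)\cdot \delta$ with $\delta$ small. (3) Collect the two contributions into a recursive inequality of the shape
\begin{equation*}
    a_n \le (1+\varepsilon')\,\rho\, a_{n-1} + (\text{bounded remainder depending on } a_{n-1},\dots),
\end{equation*}
where $a_n:=\sup_{t\ge0}h^{*n}(t)/h(t)$; then solve the recursion to obtain $a_n\le\kappa[(1+\varepsilon)\rho]^{n-1}$ for a suitable $\kappa$, choosing $\varepsilon'$ small enough that $(1+\varepsilon')$ absorbed with the lower-order terms still fits under $(1+\varepsilon)$.

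The main obstacle I anticipate is handling the ``boundary layer'' $s\in[t-B,t]$ for large $t$ cleanly: there $h(t-s)$ need not be comparable to $h(t)$ (indeed $h(t-s)$ can be much larger than $h(t)$ since $h$ decays), so one cannot simply pull out $h(t)$. The resolution is exactly the uniform-convergence hypothesis \eqref{eq_U0_3}: on a window of fixed length $B$ the ratio $h(t-s)/h(t)$ is eventually bounded by $1+\varepsilon'$, which is what makes the recursion close with rate $(1+\varepsilon)\rho$ rather than something larger; the finitely many $t$ in an initial compact set are absorbed into the constant $\kappa$ by continuity and positivity of $h$. A secondary technical point is making the induction base case ($n=1$, trivially $a_1=1$) and the transition at $n=2$ consistent with \Cref{lem_eta_B}, which is routine. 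This argument is precisely the adaptation of \cite[Lemma 3.6]{appleby2002subexponential} and \cite[Theorem 4.11]{foss2011introduction} referenced in the statement.
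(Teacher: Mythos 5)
There is a genuine gap in the recursion step. Your two-piece decomposition $h^{*n}(t)=\int_0^{t-B}h^{*(n-1)}(s)h(t-s)\,ds+\int_{t-B}^{t}h^{*(n-1)}(s)h(t-s)\,ds$ does not close at the advertised rate $(1+\varepsilon)\rho$. Applying $h^{*(n-1)}(s)\le a_{n-1}h(s)$ on the first piece and using \eqref{eq_U0_3} shows that, for $t$ large, $\int_0^{t-B}h(s)h(t-s)/h(t)\,ds \to 2\rho-\int_0^B h$, so the first piece contributes $\approx a_{n-1}\big(2\rho-\int_0^B h\big)$; symmetrically, after substituting $u=t-s$ and using \eqref{eq_U0_3} again, the second piece contributes $\approx a_{n-1}\int_0^B h$. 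There is no choice of $B$ that makes the second contribution small while keeping the first near $\rho a_{n-1}$: the two add up to $\approx 2\rho\,a_{n-1}$ regardless, yielding a recursion with rate $\approx M_h=2\hat h(0)$, not $(1+\varepsilon)\rho$. The auxiliary claim ``$1/h(t)\le C_B$'' is also false, since $h$ is integrable and hence decays.

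The missing idea is that on the near-origin region of the convolution one must \emph{not} use the pointwise bound $h^{*(n-1)}(s)\le a_{n-1}h(s)$ but instead the total-mass bound $\int_0^A h^{*(n-1)}(s)\,ds\le\rho^{\,n-1}$, which is what produces a remainder genuinely independent of $a_{n-1}$. This is why the paper's proof first normalizes $f=h/\rho$ (so that $\int_0^\infty f^{*n}=1$ for all $n$) and then splits into \emph{three} regimes for $\alpha_{n+1}$: a compact-time piece $\sup_{0\le t\le T}\int_0^t$ handled via \Cref{lem_eta_B} and continuity, a small-$s$ piece $\sup_{t>T}\int_0^A$ bounded by $(1+\varepsilon/2)\int_0^A f^{*n}\le 1+\varepsilon/2$ using unit mass (no $\alpha_n$ here), and a large-$s$ piece $\sup_{t>T}\int_A^t$ bounded by $(1+\varepsilon)\alpha_n$ using \eqref{eq_U0_2}--\eqref{eq_U0_3}. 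That yields the linear recursion $\alpha_{n+1}\le c+(1+\varepsilon)\alpha_n$ with a \emph{constant} $c$, which is essential: if, as in your sketch, the remainder were itself proportional to $a_{n-1}$, the effective rate would exceed $(1+\varepsilon)\rho$. To repair your argument, split off a region where the convolution variable is small, pass to the normalized $f$, and use the mass bound there rather than $a_{n-1}$.
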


\begin{proof}
    Let $f = h/ \rho$ so that $f$ is a subexponential function with $\int_0^\infty f(s)ds = 1$. We denote that
    \begin{equation*}
        \alpha_n = \sup_{t \geq 0}\frac{f^{*n}(t)}{f(t)}.
    \end{equation*}
    Note that $\alpha_1 = 1$. We aim to prove for some constants $c$ and $ \varepsilon$ independent of $n$ with $(1+\varepsilon)\rho<1$, such that the following holds
    \begin{equation}\label{eq:alpha_n_induction}
        \alpha_{n+1} \leq c + (1 + \varepsilon) \alpha_n,\quad n \geq 1.
    \end{equation}
    Then by induction,
    \begin{equation*}
        \alpha_n \leq c \sum_{k = 0}^{n-2} (1+\varepsilon)^k  + (1 + \varepsilon)^{n-1} \leq \frac{c}{\varepsilon}[(1 + \varepsilon)^{n-1} - 1] + (1 + \varepsilon)^{n-1} \leq (c/\varepsilon + 1)(1+\varepsilon)^{n-1},
    \end{equation*}
    At last, since $h^{*n}(t) = \rho^n f^{*n}(t)$, it follows
    \begin{equation*}
        \sup_{t \geq 0}\frac{h^{*n}(t)}{h(t)} = \sup_{t \geq 0}\frac{\rho^n f^{*n}(t)}{\rho f(t)} = \kappa [(1 + \varepsilon)\rho]^{n-1},
    \end{equation*}
    where
    \begin{equation*}
        \kappa = c/\varepsilon + 1.
    \end{equation*}

    To show \eqref{eq:alpha_n_induction}, we first notice that for $n \geq 2$,
    \begin{equation}\label{eq:alpha_n+1}
        \alpha_{n+1} \leq
        \sup_{0 \leq  t \leq T} \int_0^t \frac{f^{*n}(s) f(t - s)}{f(t)}\, ds
        + \sup_{t > T} \int_0^A \frac{f^{*n}(s) f(t - s)}{f(t)}\, ds
        + \sup_{t > T} \int_A^t \frac{f^{*n}(s) f(t - s)}{f(t)}\, ds,
    \end{equation}
    with $0 < A < T$ chosen large enough so that
    \begin{align}
        \left| \int_0^t \frac{f(s) f(t - s)}{f(t)} \, ds - \int_0^A f(s) \, ds \right|
                                                                        & < \left(1 + \tfrac{1}{2} \varepsilon \right), \quad & t \geq A,                    \\
        \sup_{0 \leq s \leq A} \left| \frac{f(t - s)}{f(t)} - 1 \right| & \leq \tfrac{1}{2} \varepsilon, \quad                & t \geq T. \label{eq_t_T_eps}
    \end{align}
    The first inequality follows from \eqref{eq_U0_2} together with the normalization condition of $f$, while the second is a direct consequence of \eqref{eq_U0_3}. Then, for the third term in \eqref{eq:alpha_n+1}, since
    \[
        \int_A^t \frac{f(t - s) f(s)}{f(t)} \, ds
        = \int_0^t \frac{f(t - s) f(s)}{f(t)} \, ds
        - \int_0^A f(s) \, ds
        - \int_0^A f(s) \left[ \frac{f(t - s)}{f(t)} - 1 \right] ds,
    \]
    it follows that
    \[
        \sup_{t > T} \int_A^t \frac{f(t - s) f(s)}{f(t)} \, ds \leq (1 + \varepsilon),
    \]
    therefore
    \begin{equation*}
        \sup_{t > T} \int_A^t \frac{f^{*n}(s) f(t - s)}{f(t)}\, ds  = \sup_{t > T} \int_A^t \frac{f^{*n}(s)}{f(s)} \frac{f(s)f(t - s)}{f(t)}\, ds \leq (1+\varepsilon) \alpha_n.
    \end{equation*}
    For the first term in \eqref{eq:alpha_n+1}, since $f$ is a subexponential function, we can choose constants \( 0 < \eta < 1 \) and \( 0 < B < A \) for $f$ by \Cref{lem_eta_B}. Then, for \( 0 \leq t \leq B \),
    \[
        \int_0^t \frac{f^{*n}(s) f(t - s)}{f(t)} \, ds
        = \int_0^t \frac{f^{*n}(s)}{f(s)} \cdot \frac{f(s) f(t - s)}{f(t)} \, ds
        \leq \eta^{n - 1} \frac{f^{*2}(t)}{f(t)} \leq \alpha_2.
    \]
    For \( B \leq t \leq T \),
    \begin{align*}
        \int_0^t \frac{f^{*n}(s) f(t - s)}{f(t)} \, ds
         & = \int_0^B \frac{f^{*n}(s)}{f(s)} \cdot \frac{f(s) f(t - s)}{f(t)} \, ds
        + \int_B^t \frac{f^{*n}(s) f(t - s)}{f(t)} \, ds                                             \\
         & \leq \eta^{n - 1} \frac{f^{*2}(t)}{f(t)}
        + \frac{\max_{0 \leq t < \infty} f(t)}{\min_{B \leq t \leq T} f(t)} \int_B^t f^{*n}(s) \, ds \\
         & \leq \alpha_2 + \frac{\max_{0 \leq t < \infty} f(t)}{\min_{B \leq t \leq T} f(t)},
    \end{align*}
    where the second inequality holds since $\int_0^\infty f^{*n}(t)dt = 1$. For the second term in \eqref{eq:alpha_n+1}, we have similarly that
    \begin{align*}
        \sup_{t > T}\int_0^A \frac{f^{*n}(s) f(t - s)}{f(t)} \, ds
        \leq (1 +\varepsilon/2) \int_0^A f^{*n}(s)  ds \leq (1 +\varepsilon/2),
    \end{align*}
    where the inequality follows from \eqref{eq_t_T_eps}. We have established \eqref{eq:alpha_n_induction} for $c_0 = (1 +\varepsilon/2) + \alpha_2 + \frac{\max_{0 \leq t < \infty} f(t)}{\min_{B \leq t \leq T} f(t)}$, including the trivial case of $n = 1$. Hence, the proof is complete.
\end{proof}

We are now ready to characterize the decay rates of $z$ and $x$ in terms of $k$.
It remains to estimate $\|z\|_k$ and $\|x\|_k$, defined analogously to \eqref{eq_Mh}.

\begin{theorem}\label{thm:x_leq_k}
    Let $k \in \mU(0)$ be a subexponential function and $g(t)\geq 0$ be a bounded continuous function. Suppose that
    \begin{equation*}
        a > \int_0^\infty k(s)ds.
    \end{equation*}
    Then the differential resolvent $z$ defined in \eqref{eq_Volterra_fundamental} satisfies $\norm{z}_k <\infty.$
    Moreover for all $t \geq 0$, the solution to \eqref{eq:Volterra} satisfies
    \begin{equation*}
        x(t) \lesssim x_0 k(t) + (k*g)(t).
    \end{equation*}
\end{theorem}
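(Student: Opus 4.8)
The plan is to leverage the representation $z = e + e*r$ from \Cref{lem_diff_resolvent}, where $e(t) = e^{-at}$ and $r$ is the resolvent of $h = k*e$, and then bound $\|z\|_k$ by controlling the resolvent series $r = \sum_{n\ge 1} h^{*n}$ in the weighted norm $\|\cdot\|_k$.

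First I would verify that $h = k*e$ lies in $\mathcal{U}(0)$, or at least that it is dominated by a constant multiple of $k$ so that $\|h\|_k < \infty$. Since $e \in L^1$ with $\|e\|_1 = 1/a$, and $k \in \mathcal{U}(0)$, the uniform convergence \eqref{eq_U0_3} together with dominated convergence gives $\lim_{t\to\infty} (k*e)(t)/k(t) = \int_0^\infty e^{-as}\,ds \cdot 1 = 1/a$ — more precisely $\int_0^\infty e(s) e^{\mu s}\,ds$ with $\mu = 0$ here — so $h$ is comparable to $k$ at infinity; combined with continuity on compact sets this yields $\|h\|_k < \infty$. The key quantity is $\rho_h := \int_0^\infty h(s)\,ds = \hat h(0) = \hat k(0)\hat e(0) = \frac{1}{a}\int_0^\infty k(s)\,ds$. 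The hypothesis $a > \int_0^\infty k(s)\,ds$ is exactly the statement $\rho_h < 1$.

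Next, using the submultiplicativity \eqref{eq_f_conv_g_fg}, $\|h^{*n}\|_k \le \|h\|_k^n$, but this crude bound is not summable unless $\|h\|_k < 1$, which we cannot assume. This is where \Cref{lem:hn} enters: applying it to the subexponential function $h$ (after checking $h \in \mathcal U(0)$), we get $\sup_{t\ge0} h^{*n}(t)/h(t) \le \kappa_h [(1+\varepsilon)\rho_h]^{n-1}$ for any $\varepsilon$ with $(1+\varepsilon)\rho_h < 1$, which is possible precisely because $\rho_h < 1$. Since $h \le C_1 k$ pointwise for some constant $C_1$, this gives $\sup_t h^{*n}(t)/k(t) \le C_1 \kappa_h [(1+\varepsilon)\rho_h]^{n-1}$, and summing the geometric series shows $\sup_t r(t)/k(t) < \infty$, i.e. $\|r\|_k < \infty$. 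Then $\|e*r\|_k \le \|e*r\|_k$: since $e$ is exponential hence trivially in $BC_k$ with $\|e\|_k<\infty$ (as $e/k \to 0$ and $e/k$ is continuous and bounded near $0$ because $k$ is bounded below on compacts), \eqref{eq_f_conv_g_fg} gives $\|e*r\|_k \le \|e\|_k\|r\|_k < \infty$, and similarly $\|e\|_k < \infty$. Hence $\|z\|_k = \|e + e*r\|_k < \infty$. Finally, from \eqref{eq_xz}, $x(t) = z(t)x_0 + (z*g)(t)$, and since $z(t) \lesssim k(t)$ we get $x(t) \lesssim x_0 k(t) + (k*g)(t)$ immediately.

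The main obstacle I anticipate is establishing that $h = k*e \in \mathcal{U}(0)$ — or circumventing this by proving directly the two facts actually needed: (i) $h(t) \le C_1 k(t)$ for all $t \ge 0$, and (ii) the Kesten-type bound of \Cref{lem:hn} applies to $h$. For (i) one needs a pointwise (not just asymptotic) domination, which requires combining the asymptotic ratio limit with the fact that $h/k$ is continuous and the behavior near $t=0$ is controlled since $h(0)=0$ while $k(0)>0$. For the applicability of \Cref{lem:hn} to $h$, one must check the three defining properties in \Cref{def_subexp} for $h$: property \eqref{eq_U0_3} follows from \eqref{eq_U0_3} for $k$ plus dominated convergence; property \eqref{eq_U0_2} follows from the Chover et al. characterization once \eqref{eq_U0_3} and integrability are known; property \eqref{eq_U0_1}, the behavior of $h^{*2}/h$ near $0$, is the most delicate and may require a separate short argument using the smoothing of convolution with $e$. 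An alternative, cleaner route — which I would adopt if the above gets technical — is to bypass $\|r\|_k$ entirely and instead apply \Cref{thm_main}'s underlying machinery directly, but since \Cref{thm_main} has not yet been proven at this point, the resolvent-series approach above is the natural one.
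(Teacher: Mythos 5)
Your proposal is correct and follows essentially the same route as the paper's proof: use the representation $z = e + e*r$ from Lemma~\ref{lem_diff_resolvent}, verify $\rho_h = \hat h(0) = \tfrac{1}{a}\hat k(0) < 1$ from the hypothesis, apply the Kesten-type bound (Lemma~\ref{lem:hn}) to the resolvent series $r = \sum_n h^{*n}$, and conclude via the submultiplicativity~\eqref{eq_f_conv_g_fg}. The one obstacle you flag --- showing that $h = k*e$ is itself subexponential so that Lemma~\ref{lem:hn} applies --- is precisely the point where the paper invokes an external result (Theorem~6.2 of Appleby and Reynolds \cite{appleby2002subexponential}), rather than re-deriving the three properties in Definition~\ref{def_subexp}; aside from that citation, your chain of estimates matches the paper's, and the paper bounds $\|h\|_k$ via $\|h\|_k \le \|e\|_k\|k\|_k = M_k\|e\|_k$ using submultiplicativity instead of the direct pointwise comparison $h \le C_1 k$, but the two are equivalent.
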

\begin{proof}
    From Theorem 6.2 in \cite{appleby2002subexponential}, $h = e*k$ is subexponential where $e(t) = e^{-at}$. By \eqref{eq_f_conv_g_fg}, we have
    \begin{equation*}
        \norm{z}_k \leq \norm{e}_k + \norm{e}_k \norm{r}_k
    \end{equation*}
    Moreover,
    \begin{equation*}
        \norm{r}_k = M_k \sup_{t \geq 0}\frac{r(t)}{k(t)} \leq \sup_{t \geq 0}\frac{r(t)}{h(t)} \cdot M_k  \sup_{t \geq 0} \frac{h(t)}{k(t)}  = \frac{1}{M_h}\norm{r}_h \norm{h}_k,
    \end{equation*}
    where $M_k:=\sup_{t \geq 0}\frac{k^{*2}(t)}{k(t)}$.
    Since $\norm{e}_k < \infty$ and $\norm{h}_k \leq \norm{e}_k \norm{k}_k = M_k\norm{e}_k$ are finite, we are left to control $\norm{r}_h$. Through the Fubini theorem,
    \begin{equation*}
        \rho = \int_0^\infty h(s)ds = \frac{1}{a} \int_0^\infty k(s)ds.
    \end{equation*}
    Hence $\rho < 1$ provided $a > \int_0^\infty k(s)ds$. We can choose small $\varepsilon = \frac{1-\rho}{2 \rho}$ so that $(1+\varepsilon)\rho  = \frac{\rho + 1}{2} < 1$. By \Cref{lem:hn}, the Neumann series \eqref{eq_resolvent} is uniformly convergent, so that
    \begin{equation*}
        \frac{1}{M_h}\norm{r}_h
        \leq
        \frac{1}{M_h}\sum_{n = 1}^\infty \norm{h^{*n}}_h
        \leq
        \sum_{n = 1}^\infty \sup_{t \geq 0}\frac{h^{*n}(t)}{h(t)}
        \leq
        \kappa\sum_{n = 1}^\infty [(1+ \varepsilon)\rho]^{n-1}
        =
        \frac{\kappa}{1-(1+\varepsilon)\rho} = \frac{2\kappa}{1-\rho}.
    \end{equation*}
    Therefore
    $$
        \norm{z}_k = \norm{e}_k(1 + \norm{r}_k) \leq \norm{e}_k(1 + {M_k}{\norm{e}_k}\frac{2\kappa}{1-\rho}) < \infty.
    $$
    At last, our result follows from \eqref{eq_f_conv_g_fg} and \eqref{eq_xz},
    $$
        x(t) = x_0 z(t) + z*g(t) \lesssim x_0k(t) + k*g(t),
    $$
    where the implicit constant is
    \begin{equation}\label{eq_D}
        \frac{\norm{z}_k}{M_k} \leq  \norm{e}_k(\frac{1}{M_k} + \norm{e}_k\frac{2\kappa}{1-\rho}) := D_{\{a, k\}}.
    \end{equation}
\end{proof}

\subsection{Proof of \Cref{thm_main}}


\begin{proof}
    We first consider the case of $k \in \mU(0)$. Let $w(t) = x(t) - y(t)$, where $x$ is the solution to \eqref{eq:Volterra} with $x(0) = y(0)$. Then $w$ satisfies
    \begin{equation*}
        w'(t) \geq -a w(t) + \int_0^t k(t-s)w(s)ds, \quad w(0) = 0.
    \end{equation*}
    Observe that $e^{-at}(e^{at}w(t))' = w'(t) + aw(t)$, so multiplying the inequality by $e^{at}$ and integrate, we obtain
    \begin{equation*}
        w(T) \geq \int_0^T \int_0^t e^{-a(T-t)}k(t-s)w(s)dsdt = (h*w)(T),
    \end{equation*}
    where $h = e*k$. Define \( f := w - h * w \geq 0 \), so that \( w \) satisfies the Volterra equation \eqref{eq:w_Volterra}. By \Cref{lem_resolvent}, the solution admits the representation
    \[
        w = f + r * f,
    \]
    where \( r \) is the resolvent associated with \( h \). Since $k(t) \geq 0$ implies $h(t) \geq 0$, it follows from the Neumann series that $r(t) \geq 0$ for all $t \geq 0$. Given that $f(t) \geq 0$, we conclude that $w(t) \geq 0$, i.e. $y(t) \leq x(t)$. Applying the bound for $x(t)$ from \Cref{thm:x_leq_k}, which requires $a > \int_0^\infty k(s) ds = \hat {k}(0)$, the desired result follows, with the implicit constant $C_0 = D_{\{a, k\}}$ defined as in \eqref{eq_D}.

    For the case of $k \in \mU(\mu)$ for $\mu < 0$, it holds from \eqref{eq_sub_to_exp} that $k(t) = e^{\mu t} \varphi(t)$ where $\varphi(t) \in \mU(0)$. Let $w(t) = e^{-\mu t} y(t) $, then \eqref{eq_yakg} implies
    \begin{equation*}
        e^{\mu t}(w'(t) + \mu w(t)) \leq -a e^{\mu t}w(t) + \int_0^t e^{\mu(t-s)}\varphi (t-s) e^{\mu s} w(s)ds + g(t),
    \end{equation*}
    which simplifies to the previous case when dividing $e^{\mu t}$,
    \begin{equation*}
        w'(t) \leq -(a + \mu) w(t)+ \int_0^t \varphi(t-s) w(s) \,ds + e^{-\mu t}g(t).
    \end{equation*}
    Hence when
    $$a + \mu > \int_0^\infty \varphi(s)\,ds  = \int_0^\infty e^{-\mu s} k(s)ds = \hat {k}(\mu),$$
    it holds that $w(t) \lesssim w(0)\varphi(t) + \varphi*(e^{-\mu t}g(t)), $ which implies
    \begin{equation*}
        y(t) \lesssim y_0e^{\mu t}\varphi(t) + e^{\mu t}\int_0^t \varphi(s) e^{-\mu(t-s)}g(t-s) ds = y_0 k(t) + k*g(t).
    \end{equation*}
    Following the convention in \eqref{eq_D}, the implicit constant is
    $C_0= D_{\{a+\mu,\,e^{-\mu t}k(t)\}}$. Note that it also covers the case of $\mu = 0$.
\end{proof}

\begin{example}\label{ex_powerlaw}
    Consider the case that $k(t) = c(t + \alpha)^{-\beta}$. Note that $k\in \mU(0)$ is subexpoential when $\beta > 1$. By \Cref{thm_main}, if
    \begin{equation*}
        a > \int_0^\infty k(t) dt = \frac{c\alpha^{1 - \beta}}{\beta - 1},
    \end{equation*}
    then $y(t) \lesssim k(t) + k*g(t)$. In particular, if $g(t) = 0$, we have $y(t) \lesssim t^{-\beta}$.
\end{example}


\subsection{Exactly exponential case via characteristic equations}
We conclude the discussion by incorporating a classical method that uses the Laplace transform and characteristic equations for the case where $k$ exhibits exactly exponential decay.
\begin{lemma}\label{lem_exp_case}
    Suppose \eqref{eq_yakg} holds for any $t \geq 0$, where $a>0$ is a constant and $g(t) \geq 0$ is a bounded continuous function. Assume that the following holds.
    \begin{enumerate}[label=(\roman*), leftmargin=2.5em]
        \item The constant $\gamma^* = \sup\{\Re \lambda: \lambda + a - \hat k(\lambda) = 0\}< \infty$ and fix $\gamma > \gamma^*$.
        \item The function $k \in L^1_{loc}(\R^+)$ so that
    \end{enumerate}
    \begin{equation}\label{eq_kk}
                  \int_0^\infty e^{-\gamma t}\abs{k(t)}dt < \infty, \quad
                  \int_0^\infty t e^{-\gamma t}\abs{k(t)}dt < \infty.
              \end{equation}
    Then we have
    \begin{equation*}
        y(t) \lesssim e^{\gamma t} + \int_0^t e^{\gamma(t-s)}g(s)ds.
    \end{equation*}
\end{lemma}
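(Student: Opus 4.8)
The plan is to strip off the exponential weight $e^{\gamma t}$, thereby landing inside the classical $L^1$ Volterra theory, then invoke a Paley--Wiener theorem to bound the differential resolvent, and finally transfer that bound to $y$ by the same comparison argument used for \Cref{thm_main}. (A naive Gr\"onwall estimate on \eqref{eq_yakg}, bounding $(k*y)(t)$ by $\sup_{[0,t]}|y|\cdot\|k_\gamma\|_1$ after the substitution below, only closes when $\int_0^\infty e^{-\gamma t}|k(t)|\,dt < a+\gamma$, which is far stronger than hypothesis (i); so the resolvent is unavoidable.)

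\emph{Reduction.} Let $z$ be the differential resolvent from \eqref{eq_Volterra_fundamental}, and set $u(t):=e^{-\gamma t}z(t)$ and $k_\gamma(t):=e^{-\gamma t}k(t)$. A one-line computation, $u'=-\gamma u+e^{-\gamma t}z'=-\gamma u+e^{-\gamma t}(-az+k*z)=-(a+\gamma)u+k_\gamma*u$, shows that $u$ is the differential resolvent for the parameter $a+\gamma$ and kernel $k_\gamma$, with $u(0)=1$; by the first bound in \eqref{eq_kk}, $k_\gamma\in L^1(\R_+)$. Laplace-transforming, $\hat u(\lambda)=\bigl(\lambda+a+\gamma-\hat k(\lambda+\gamma)\bigr)^{-1}$, so the poles of $\hat u$ are precisely the zeros of $\nu+a-\hat k(\nu)$ translated by $-\gamma$; hypothesis (i) places all of them in $\{\Re\lambda\le\gamma^*-\gamma<0\}$, and since $\hat k(\lambda+\gamma)\to0$ for $\Re\lambda\ge0$ we also get $\hat u(\lambda)\to0$ at infinity there. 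Hence $\hat u$ is analytic, bounded, and vanishing at $\infty$ on the closed half-plane $\{\Re\lambda\ge0\}$.

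\emph{Resolvent bound.} I would then invoke the Volterra Paley--Wiener theorem (as in \cite{gripenberg1990volterra}; cf.\ also the characteristic-equation analysis of \cite{kordonis1999behavior} and the exact-asymptotics machinery of \cite{appleby2002subexponential,appleby2006exact}) to conclude $u\in L^1(\R_+)$. Since $k_\gamma\in L^1$ this forces $u'=-(a+\gamma)u+k_\gamma*u\in L^1$, so $u$ is continuous with a finite — hence zero — limit at infinity and therefore bounded, giving $|z(t)|=e^{\gamma t}|u(t)|\le\|u\|_\infty\,e^{\gamma t}\lesssim e^{\gamma t}$. The role of the \emph{second} moment bound in \eqref{eq_kk} is exactly to make $\hat k(\gamma+i\cdot)$ regular enough on the vertical line $\Re\lambda=\gamma$ (a $C^1$ function with sufficient decay) for the contour shift and Fourier inversion underlying Paley--Wiener to be legitimate; establishing this boundary regularity is the main technical obstacle.

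\emph{Transfer to $y$.} Let $x$ solve the equality version \eqref{eq:Volterra} with $x(0)=y_0$; by \Cref{lem_diff_resolvent} (whose representation uses only linearity), $x=z\,y_0+z*g$. Writing the slack in \eqref{eq_yakg} as $\psi\ge0$, the difference $x-y$ solves \eqref{eq:Volterra} with data $0$ and forcing $\psi$, hence $x-y=z*\psi$, i.e.\ $y=z\,y_0+z*g-z*\psi$. When $k\ge0$ — the case relevant to GLE friction kernels and to \Cref{thm_main} — the resolvent of the nonnegative kernel $e^{-at}*k$ is nonnegative, so $z\ge0$ exactly as in that proof, whence $z*\psi\ge0$ and $y\le x=z\,y_0+z*g$; combining with the resolvent bound,
\[
  y(t)\ \le\ \|u\|_\infty\Bigl(y_0\,e^{\gamma t}+\int_0^t e^{\gamma(t-s)}g(s)\,ds\Bigr)\ \lesssim\ e^{\gamma t}+\int_0^t e^{\gamma(t-s)}g(s)\,ds,
\]
which is the claim. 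Positivity of $k$ enters only here; for genuinely sign-changing kernels the clean monotone comparison fails — one can no longer discard $z*\psi$ — and handling that case requires a more delicate estimate of $z*\psi$ that I would treat separately.
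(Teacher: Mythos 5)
Your proposal is correct and valid, but it takes a genuinely different route from the paper. Both proofs reduce, via the $k\ge 0$ comparison argument borrowed from \Cref{thm_main}, to showing the differential resolvent $z$ satisfies $|z(t)|\lesssim e^{\gamma t}$; your acknowledgment that this monotone comparison is where nonnegativity of $k$ is used is accurate and mirrors an implicit assumption in the paper's proof. After that reduction the two arguments diverge. The paper works directly with the Bromwich inversion formula $z(t)=\tfrac{1}{2\pi}e^{\gamma t}\int_{-\infty}^\infty e^{i\xi t}m(\xi)\,d\xi$ on the vertical line $\Re\lambda=\gamma$, and integrates by parts once to bring in $m'(\xi)=-i\Delta'(\gamma+i\xi)/\Delta(\gamma+i\xi)^2$; boundedness of the resulting integral is then secured by the lower bound $\inf_\xi|\Delta(\gamma+i\xi)|>0$, the $O(\xi)$ growth of $\Delta$, and the Riemann--Lebesgue decay of $\hat k'(\gamma+i\xi)$. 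You instead shift by $e^{-\gamma t}$ so that $u:=e^{-\gamma t}z$ becomes the differential resolvent for $(a+\gamma,\,k_\gamma)$, verify the half-plane nonvanishing condition from hypothesis (i), and invoke the Volterra half-line Paley--Wiener theorem to get $u\in L^1$; bootstrapping $u'=-(a+\gamma)u+k_\gamma*u\in L^1$ then yields $u\in L^\infty$. Both routes land at the same $|z(t)|\lesssim e^{\gamma t}$.

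The one substantive misstatement in your write-up concerns where the second integrability condition in \eqref{eq_kk}, namely $\int_0^\infty t e^{-\gamma t}|k(t)|\,dt<\infty$, is actually used. You claim it supplies the boundary regularity needed ``for the contour shift and Fourier inversion underlying Paley--Wiener,'' but the Volterra Paley--Wiener theorem you invoke (cf.\ \cite{gripenberg1990volterra}) requires only that the shifted kernel $k_\gamma$ be in $L^1(\R^+)$ --- that is, only the first condition in \eqref{eq_kk} --- together with nonvanishing of the symbol on $\{\Re\lambda\ge 0\}$; no differentiability of $\hat k_\gamma$ on the critical line is needed. The second moment condition is really an artifact of the paper's method: it is precisely what makes $m(\xi)$ differentiable so that the integration-by-parts estimate on $R_\gamma(t)$ goes through. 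So your argument, correctly read, is slightly stronger than the paper's in that it dispenses with the second hypothesis in \eqref{eq_kk}; you should revise the remark rather than defend that hypothesis as essential to Paley--Wiener. Beyond that, the change of variable $u=e^{-\gamma t}z$, the identification of the poles of $\hat u$ as shifted characteristic roots, and the $L^1\Rightarrow L^\infty$ upgrade via $u'\in L^1$ are all sound.
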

\begin{proof}
    Because of the comparison used in the proof of \Cref{thm_main}, it suffices to prove for the equality case. From \Cref{lem_diff_resolvent}, we only need to show that the differential resolvent satisfies $|z(t)| \lesssim e^{\gamma t}$ for all $\gamma > \Re \gamma^*$. Take Laplace transform to \eqref{eq_Volterra_fundamental}, we have
    \begin{equation}\label{eq_zhat}
        \hat z(\lambda) = \frac{1}{\lambda + a - \hat {k}(\lambda)}:= \frac{1}{\Delta(\lambda)}.
    \end{equation}
    To determine the decay rate of $z$, we apply the Bromwich integral for the inverse Laplace transform (cf., e.g., \cite{doetsch2012introduction}). Let $m(\xi) := \hat{z}(\gamma + i\xi) = \frac{1}{\Delta(\gamma + i \xi)}$ and apply change of variable $\lambda = \gamma + i\xi$, it holds
    \begin{equation*}
        z(t) = \frac{1}{2\pi i}\lim_{T \to \infty}\int_{\gamma - iT}^{\gamma + iT}e^{\lambda t} \hat{z}(\lambda) d\lambda = \frac{1}{2\pi }e^{\gamma t}R_\gamma(t), \quad R_\gamma(t) := \int_{-\infty}^{\infty} e^{ i \xi t }m(\xi)d\xi.
    \end{equation*}
    Note that the integral is well-defined since $\gamma$ is chosen to avoid all the singularities of $m(\xi)$. We are left to show that $|R_\gamma(t)|$ is bounded for large $t$.

    From the first assumption in \eqref{eq_kk}, $k(t)e^{-\gamma t}\mathbbm{1}_{[0, \infty)}(t)$ is integrable, and its Fourier transform is expressed as $\hat k(\gamma + i \xi)$, which converges to 0 as $\xi \to \infty$ by the Riemann-Lebesgue Lemma. Hence from \eqref{eq_zhat}, it holds $|\Delta(\gamma + i\xi)| = O(\xi)$ as $\xi \to \infty$, which implies $m(\xi) \to 0 $ as $\xi \to \infty$. Apply integration by parts on $R_\gamma(t)$, it holds
    \begin{equation*}
        R_\gamma(t) = -\frac{1}{it}\int_{-\infty}^{\infty} e^{i \xi t} m'(\xi)\,d\xi.
    \end{equation*}
    Note that the second assumption in \eqref{eq_kk} implies that $\hat k$ is differentiable, so is $m$. Hence $m'(\xi) = -\frac{i\Delta '(\gamma + i\xi)}{\Delta(\gamma + i\xi)^2}$. From the second assumption in \eqref{eq_kk}, $tk(t)e^{-\gamma t}$ is integrable and its Fourier transform $-\hat k'(\gamma + i\xi)$ converges to $0$ as $\xi \to \infty$, and hence is bounded. We have shown that $|\Delta(\gamma + i\xi)|^2 = O(\xi^2)$ when $\xi$ is large. To handle the region where $\xi$ is small, we notice from the selection of $\gamma$, there exists $\delta_0:=\inf_{\xi \in \R} |\Delta(\gamma + i\xi)| >0$. Hence, the $R_\gamma(t)$ is bounded for all large $t$, and the proof is complete.
\end{proof}

\begin{example}\label{ex_exp}
    Consider $k(t) = ce^{-\beta t}$ with $\beta > 0$ and $c >0$. Note that $k \notin \mU(-\beta)$ as demonstrated in \Cref{rmk_exp_exact}. We shall use \Cref{lem_exp_case}, which needs to find the (maximal real part of) characteristic roots of \eqref{eq_ch_x}. Since $\hat k(\lambda) = \frac{c }{\beta + \lambda}$, it holds that
    \begin{equation*}
        \lambda + a = \frac{c}{\beta + \lambda} \implies \lambda =  \frac{1}{2}[-(a + \beta) \pm \sqrt{(a + \beta)^2 -4(a \beta - c))}].
    \end{equation*}
    Note that there are always two real roots since the determinant is $(a-\beta)^2 +4c>0$.
    By \Cref{lem_exp_case}, $y(t) \lesssim e^{\gamma t} + (\exp(\gamma \cdot)*g)(t)$ for all $\gamma > \gamma^* = \frac{1}{2}[-(a + \beta) + \sqrt{(a + \beta)^2 -4(a \beta - c))}]$. In particular, if $g(t) = 0$, $y(t) \lesssim e^{(\gamma^* + \varepsilon)t}$ for any $\varepsilon > 0$. It is clear that $\lim_{a \to \infty} \gamma^* = -\beta$.
\end{example}


\section{Error analysis for first-order equations}\label{sec:first_order}
We consider the first-order Volterra-type stochastic differential equation:
\begin{equation} \label{eq:true}
    dV_t = -\gamma V_t \,dt -\int_0^t K(t, s) V_s \, ds\, dt + \sigma \, dB_t,
\end{equation}
where \( K(t, s) \in \R^{d \times d}\) is the true memory kernel, and \( B_t \) is a standard Brownian motion in \( \mathbb{R}^d \). Now consider a perturbed system with an approximate (or learned) kernel \( \tilde K(t, s) \), we define a new process \( \tilde V_t \) that satisfies
\begin{equation} \label{eq:approx}
    d\tilde V_t = -\gamma \tilde V_t \,dt - \int_0^t \tilde K (t, s) \tilde V _s \, ds\, dt + \sigma \, d \tilde B_t.
\end{equation}
Let $\delta K := K-\tilde K$. To characterize the kernel difference, we define the following norm.
\begin{definition}[Schur-type norm]\label{def_Schur_norm}
    For any kernel function $K(t, s) \in \R^{d \times d}$ and a positive function $h(t)$, we define the following Schur-type weighted norm as
    \begin{equation*}
        \nnorm{K}_h = \sup_{ t \geq 0} \left(\int_0^t \frac{\norm{K(t, s)}^2}{h(t-s)}ds\right)^{1/2},
    \end{equation*}
    where $\norm{K(t, s)}$ inside the integral represents the operator norm for the matrix $K(t, s)$.
\end{definition}

It is straightforward to verify that $\nnorm{\cdot}_h$ is a norm. Moreover, the norm $\nnorm{\cdot}_h$ mirrors the weighted Schur test for integral operators. The Cauchy-Schwarz inequality with the weight $h$ yields
\begin{equation}\label{eq_Schur}
    \left|\int_0^t K(t,s)f(s)\,ds \right|
    \;\le\;
    \Big(\int_0^t \tfrac{\|K(t, s)\|^2}{h(t-s)}\,ds\Big)^{\!\!1/2}
    \Big(\int_0^t h(t-s)\,|f(s)|^2\,ds\Big)^{\!\!1/2}.
\end{equation}
Taking the supremum over $t$ shows that $\nnorm{K}_h$ controls the operator norm of the integral operator with $K$. This resembles the Schur-test (cf.\ \cite{schur1911bemerkungen}; also \cite[Theorem 0.3.1]{sogge2017fourier}), which motivates the term ``Schur-type'' norm.

We make an assumption on the decay rate and the estimation error of the kernels.
\begin{assumption}\label{asmp:decay_f2}
    There exists a function $h \in \mU(\mu)$ (see \Cref{def_exp}) for some $\mu \leq 0$ such that
    \begin{equation*}
        \nnorm{K}_h < \infty, \quad \nnorm{\tilde K}_h < \infty, \quad \nnorm{\delta K}_{h} < \infty.
    \end{equation*}
\end{assumption}

\begin{remark}
    When $K$ is a one-dimensional translation-invariant kernel, the condition $\nnorm{K}_h < \infty$ implies that the decay rate of $K$ is sufficiently fast compared to $\sqrt{h}$. Specifically,
    \begin{equation*}
        \nnorm{K}_h^2 = \sup_{t \geq 0}\int_0^t \frac{\left(K(t-s)\right)^2}{h(t-s)}ds = \int_0^\infty \abs{K(s)}^2 \rho(s)ds = \norm{K}_{L^2(\rho)}^2,
    \end{equation*}
    where the weight function $\rho(s) = \frac{1}{h(s)}$ is supported on $[0, \infty)$. Thus, the condition $\nnorm{\delta K}_h<\infty$ is equivalent to
    \[
        \|K-\tilde K\|_{L^2(\rho)}<\infty,
    \]
\end{remark}

Our goal is to analyze the difference process $V_t - \tilde V_t$. We will first derive the bound for the true trajectory in \eqref{eq:true}, and then control the difference. Both analyses rely on \Cref{thm_main}.

\begin{theorem}\label{thm:Vt_contraction}
    Consider the SDE \eqref{eq:true}. Suppose \Cref{asmp:decay_f2} holds and moreover,
    \begin{equation}\label{eq_mu_2gamma}
        \mu + 2\gamma > 2 \nnorm{K}_h\rbracket{ \hat{h}(\mu)}^{1/2}.
    \end{equation}
    Then there exists positive constants $C_1$ and $c_1$ such that for all $t > 0$,
    \begin{equation*}
        \E|V_t|^2 \leq C_1 h(t) + c_1.
    \end{equation*}
\end{theorem}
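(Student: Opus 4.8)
The plan is to reduce the second-moment estimate to the scalar Volterra comparison theorem \Cref{thm_main}, applied to $y(t):=\E|V_t|^2$. First I would record well-posedness of \eqref{eq:true}: under \Cref{asmp:decay_f2} the operator $f\mapsto\int_0^tK(t,s)f(s)\,ds$ is bounded on $C([0,T];\R^d)$ for every $T$ by the Schur inequality \eqref{eq_Schur}, so a Picard iteration gives a unique strong solution with a continuous modification and $\sup_{0\le t\le T}\E|V_t|^2<\infty$; hence $\int_0^tV_s^\top\sigma\,dB_s$ is a genuine martingale after the usual localization. Applying It\^o's formula to $|V_t|^2$ along \eqref{eq:true} and taking expectations then yields
\[
y'(t)=-2\gamma\,y(t)-2\,\E\Bigl[V_t^\top\!\int_0^tK(t,s)V_s\,ds\Bigr]+\operatorname{tr}(\sigma\sigma^\top).
\]

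Next I would dominate the memory cross-term by a convolution of $y$ with $h$. Cauchy--Schwarz in $\R^d$ followed by the weighted Schur inequality \eqref{eq_Schur} gives
\[
\Bigl|V_t^\top\!\int_0^tK(t,s)V_s\,ds\Bigr|\le\nnorm{K}_h\,|V_t|\Bigl(\int_0^th(t-s)\,|V_s|^2\,ds\Bigr)^{1/2},
\]
and a Young inequality with a free parameter $\eps>0$, after taking expectations and applying Tonelli, bounds $2\,\E[\,\cdot\,]$ by $\nnorm{K}_h\eps\,y(t)+(\nnorm{K}_h/\eps)\int_0^th(t-s)\,y(s)\,ds$. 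Substituting back produces the integro-differential inequality
\[
y'(t)\le-(2\gamma-\nnorm{K}_h\eps)\,y(t)+\frac{\nnorm{K}_h}{\eps}\int_0^th(t-s)\,y(s)\,ds+\operatorname{tr}(\sigma\sigma^\top),
\]
which has the form \eqref{eq_yakg} with $a=2\gamma-\nnorm{K}_h\eps$, kernel $k:=(\nnorm{K}_h/\eps)\,h$, and $g\equiv\operatorname{tr}(\sigma\sigma^\top)$. Since $\mU(\mu)$ is invariant under multiplication by a positive constant, $k\in\mU(\mu)$ with $\hat k(\mu)=(\nnorm{K}_h/\eps)\,\hat h(\mu)$ (I take $\nnorm{K}_h>0$; the case $\nnorm{K}_h=0$ reduces to the Ornstein--Uhlenbeck bound).

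The decisive step is the choice of $\eps$. The hypothesis $\mu+a>\hat k(\mu)$ of \Cref{thm_main} reads $\mu+2\gamma>\nnorm{K}_h\eps+(\nnorm{K}_h/\eps)\hat h(\mu)$; since $\hat h(\mu)\in(0,\infty)$ (indeed $\hat h(\mu)=\norm{\varphi}_1$ for the $\varphi\in\mU(0)$ with $h=\varphi e^{\mu t}$), the right-hand side is minimized at $\eps=\hat h(\mu)^{1/2}$, where it equals $2\nnorm{K}_h\hat h(\mu)^{1/2}$, so the requirement becomes exactly \eqref{eq_mu_2gamma}. The same inequality together with $\mu\le0$ also forces $a=2\gamma-\nnorm{K}_h\hat h(\mu)^{1/2}>0$, so \Cref{thm_main} applies and gives $y(t)\le C_0\bigl(y_0\,k(t)+(k*g)(t)\bigr)$ with $y_0=\E|V_0|^2$ and $C_0=C_0(\gamma,\nnorm{K}_h,h,\mu)$. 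Finally $h\in L^1(\R_+)$ with $\norm{h}_1\le\hat h(\mu)<\infty$, so $(k*g)(t)=(\nnorm{K}_h/\eps)\operatorname{tr}(\sigma\sigma^\top)\int_0^th(u)\,du$ is bounded uniformly in $t$; this yields the claim with $C_1=C_0\nnorm{K}_h\hat h(\mu)^{-1/2}\E|V_0|^2$ and $c_1=C_0\nnorm{K}_h\hat h(\mu)^{-1/2}\operatorname{tr}(\sigma\sigma^\top)\norm{h}_1$.

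I expect the main obstacle to be the bookkeeping rather than any individual estimate: the Young split has to be carried with the free parameter $\eps$ all the way through, so that its optimization reproduces precisely the sharp threshold \eqref{eq_mu_2gamma} while simultaneously keeping $a>0$ --- a cruder split would leave a strictly worse, non-matching condition. The remaining delicate point is the rigorous justification of the moment ODE, namely finiteness and $C^1$-regularity of $y$ and the vanishing of the martingale term, which rests on the a priori local second-moment bound for the Volterra SDE and is otherwise standard.
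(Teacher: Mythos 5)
Your proposal is correct and follows essentially the same route as the paper: It\^o's formula on $|V_t|^2$, the weighted Schur/Cauchy--Schwarz estimate \eqref{eq_Schur} to dominate the memory cross-term, a Young split with a free parameter, and then \Cref{thm_main} after optimizing that parameter; your reparametrization of the Young inequality (absorbing $\nnorm{K}_h$ into $\eps$) is equivalent to the paper's, and the optimized $a$, $k$, and hence $C_1$, $c_1$ coincide exactly. Your additional remarks — the well-posedness/martingale justification and the observation that $\mu\le 0$ plus \eqref{eq_mu_2gamma} force $a>0$ so that \Cref{thm_main} is applicable — are correct and make explicit two points the paper leaves implicit.
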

\begin{proof}
    From It\^{o}'s formula, Cauchy-Schwarz, and Young's inequality,
    \begin{align*}
        d|V_t|^2
         & = -2\gamma |V_t|^2dt - 2V_t \cdot \int_0^t K(t, s)V_sds + \Tr(\sigma \sigma^\top) dt + 2V_t \cdot \sigma dB_t                                              \\
         & \leq -2\gamma |V_t|^2 + \frac{1}{\varepsilon} |V_t|^2 + \varepsilon \abs{\int_0^t K(t, s) V_s ds}^2 + \Tr(\sigma \sigma^\top) dt + 2V_t \cdot \sigma dB_t,
    \end{align*}
    where $\varepsilon>0$ is a positive constant to be chosen later. By \eqref{eq_Schur}, it holds
    \begin{equation}\label{eq:KV2_leq_MKKV2}
        \abs{\int_0^t K(t, s) V_s ds}^2 \leq \rbracket{\int_0^t \norm{K(t, s)} \abs{V_s}ds}^2 \leq \int_0^t \frac{\norm{K(t, s)}^2}{h(t-s)}ds \,\cdot \int_0^t h(t-s)\abs{V_s}^2 ds \leq \nnorm{K}_h^2 \int_0^t h(t-s)\abs{V_s}^2 ds,
    \end{equation}
    where the last follows from the \Cref{asmp:decay_f2}. Then we take expectation, use Fubini theorem and let $y(t) = \E|V_t|^2$, so that
    \begin{equation*}
        y'(t) \leq -2\gamma y(t) + \frac{1}{\varepsilon}y(t) + \varepsilon \nnorm{K}_h^2 \int_0^t h(t-s) y(s)ds  + \Tr(\sigma \sigma^\top).
    \end{equation*}
    Denoting $a = 2\gamma - \frac{1}{\varepsilon}$, $g(t)$ as a constant $ \Tr(\sigma \sigma ^\top)$ and $k(t - s) = \varepsilon \nnorm{K}_h^2 h(t-s)$, we achieve
    \begin{equation*}
        y'(t) \leq -ay(t) + \int_0^t k(t - s) y(s)ds + g(t).
    \end{equation*}
    In view of \eqref{eq_mu_2gamma}, we choose $\varepsilon = (\nnorm{K}_h(\hat h(\mu))^{1/2})^{-1}$, which ensures the kernel $k$ satisfies the requirement in \Cref{thm_main}. Hence
    \begin{equation*}
        y(t) \leq C_0 \nnorm{K}_h(\hat h(\mu))^{-1/2}( y_0 h(t) + (h*g)(t)),
    \end{equation*}
    for all $t \geq 0$, where $C_0$ is the constant in \Cref{thm_main} with $a$ and $k$ substituted.
    Note that $(h*g)(t) \leq \Tr(\sigma \sigma^\top) \|h\|_1$ since $g$ is a constant, the desired result follows with
    \begin{equation*}
        C_1 = \mathbb{E}|V_0|^2   \nnorm{K}_h(\hat h(\mu))^{-1/2}C_0, \quad c_1 = C_0 \nnorm{K}_h(\hat h(\mu))^{-1/2} \Tr(\sigma \sigma^\top) \|h\|_1.
    \end{equation*}
    It is clear that $c_1 = 0$ if $\sigma = 0$.
\end{proof}


We now prove the error estimate with perturbed memory kernels.

\begin{theorem}\label{thm:Wt_contraction}
    Consider the equations \eqref{eq:true} and \eqref{eq:approx} under a synchronized coupling where $B_t = \tilde B_t$. Suppose \Cref{asmp:decay_f2} and \eqref{eq_mu_2gamma} hold, and moreover,
    \begin{equation}\label{eq_mu_2gamma2}
        \mu + 2\gamma > 2 \nnorm{\tilde K}_h \rbracket{2 \hat{h}(\mu)}^{1/2}.
    \end{equation}
    Then for all $t >0$, it holds
    \begin{equation*}
        \E|V_t - \tilde V_t|^2 \lesssim \left(\E|V_0 - \tilde V_0|^2 + \nnorm{\delta K}_h^2\right)h(t) + \nnorm{\delta K}_h^2\Tr(\sigma\sigma^\top),
    \end{equation*}
    where the constant depends on $\tilde K$, $h$, $\gamma$ and $K$. See \eqref{eq:C2c2} for explicit constants.
\end{theorem}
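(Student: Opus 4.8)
The plan is to analyze the difference process $W_t := V_t - \tilde V_t$ under the synchronized coupling $B_t=\tilde B_t$. With this coupling the Brownian increments in \eqref{eq:true} and \eqref{eq:approx} cancel, so $W_t$ is absolutely continuous in $t$ and, after adding and subtracting $\int_0^t\tilde K(t,s)V_s\,ds$, satisfies the pathwise integro-differential equation
\begin{equation*}
    W_t' = -\gamma W_t - \int_0^t \tilde K(t,s)\,W_s\,ds - \int_0^t \delta K(t,s)\,V_s\,ds .
\end{equation*}
The essential structural point is that the last term is a forcing driven by the \emph{true} trajectory $V_s$, not a multiple of $W_s$, so it must be controlled using the a priori moment bound of \Cref{thm:Vt_contraction} rather than absorbed into the Volterra/Gr\"onwall structure.

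Next I would run an energy estimate. Differentiating $|W_t|^2$ (the stochastic term having cancelled, this is the chain rule) gives
\begin{equation*}
    \frac{d}{dt}|W_t|^2 = -2\gamma|W_t|^2 - 2W_t\cdot\!\int_0^t \tilde K(t,s)W_s\,ds - 2W_t\cdot\!\int_0^t \delta K(t,s)V_s\,ds .
\end{equation*}
To each cross term I apply Young's inequality with a common parameter $\varepsilon>0$, and to each integral the weighted Cauchy--Schwarz/Schur estimate \eqref{eq_Schur} with weight $h$, which turns $|\int_0^t\tilde K(t,s)W_s\,ds|^2$ into $\nnorm{\tilde K}_h^2\int_0^t h(t-s)|W_s|^2\,ds$ and $|\int_0^t\delta K(t,s)V_s\,ds|^2$ into $\nnorm{\delta K}_h^2\int_0^t h(t-s)|V_s|^2\,ds$ (using \Cref{asmp:decay_f2}). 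Taking expectations and Fubini, with $y(t):=\E|W_t|^2$, yields
\begin{equation*}
    y'(t) \le -a\,y(t) + \int_0^t k(t-s)\,y(s)\,ds + g(t), \qquad a = 2\gamma - \tfrac{2}{\varepsilon},\quad k = \varepsilon\nnorm{\tilde K}_h^2\, h,
\end{equation*}
with $g(t) = \varepsilon\nnorm{\delta K}_h^2\int_0^t h(t-s)\,\E|V_s|^2\,ds$. Since \eqref{eq_mu_2gamma} is assumed, \Cref{thm:Vt_contraction} gives $\E|V_s|^2\le C_1 h(s)+c_1$, which shows $g$ is bounded (as $h\in L^1$ and $\E|V_\cdot|^2$ is bounded) and continuous (a convolution of continuous functions); also $k\in\mU(\mu)$ since scaling by a positive constant preserves the class, with $\hat k(\mu)=\varepsilon\nnorm{\tilde K}_h^2\hat h(\mu)$.

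Then I would choose $\varepsilon$ to meet the hypothesis of \Cref{thm_main}. The requirement $\mu+a>\hat k(\mu)$ reads $\mu+2\gamma-\tfrac{2}{\varepsilon}>\varepsilon\nnorm{\tilde K}_h^2\hat h(\mu)$; by AM--GM the quantity $\tfrac{2}{\varepsilon}+\varepsilon\nnorm{\tilde K}_h^2\hat h(\mu)$ is minimized at $\varepsilon=\sqrt2\,(\nnorm{\tilde K}_h\,\hat h(\mu)^{1/2})^{-1}$, where it equals $2\sqrt2\,\nnorm{\tilde K}_h\hat h(\mu)^{1/2}=2\nnorm{\tilde K}_h(2\hat h(\mu))^{1/2}$, so the condition becomes exactly \eqref{eq_mu_2gamma2} (and this same bound forces $a>0$ since $\mu\le0$). \Cref{thm_main} then gives $y(t)\le C_0\bigl(y(0)k(t)+(k*g)(t)\bigr)$. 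To conclude I simplify the convolutions using $(h*h)(t)\le M_h h(t)$ (from \eqref{eq_Mh}/\eqref{eq_U0_2}) and $(h*1)(t)\le\|h\|_1$: from $\E|V_s|^2\le C_1 h(s)+c_1$ one gets $(h*\E|V_\cdot|^2)(t)\le C_1 M_h h(t)+c_1\|h\|_1$, hence $(k*g)(t)=\varepsilon^2\nnorm{\tilde K}_h^2\nnorm{\delta K}_h^2\,(h*h*\E|V_\cdot|^2)(t)\le\varepsilon^2\nnorm{\tilde K}_h^2\nnorm{\delta K}_h^2\bigl(C_1 M_h^2 h(t)+c_1\|h\|_1^2\bigr)$. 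Substituting the explicit values $C_1\propto\E|V_0|^2$ and $c_1\propto\Tr(\sigma\sigma^\top)$ from \Cref{thm:Vt_contraction} and collecting terms produces $\E|V_t-\tilde V_t|^2\lesssim\bigl(\E|V_0-\tilde V_0|^2+\nnorm{\delta K}_h^2\bigr)h(t)+\nnorm{\delta K}_h^2\Tr(\sigma\sigma^\top)$, with the constants $C_2,c_2$ of \eqref{eq:C2c2} read off from this chain. The main obstacle is precisely the treatment of the $\delta K$ forcing: it cannot be folded into the Volterra kernel $k$, so one must feed in the a priori estimate for $\E|V_t|^2$ and then verify that the resulting $g$ has the regularity required by \Cref{thm_main} and that its convolution with $k$ retains the decay rate $h$; the only other delicate point is the single parameter choice $\varepsilon$ calibrating $a>0$ and the kernel-smallness condition simultaneously, while the energy estimate, the Schur bounds, and the constant bookkeeping are routine.
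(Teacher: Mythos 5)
Your proposal is correct and follows essentially the same route as the paper: synchronize the noise, split the memory mismatch as $\tilde K(t,s)W_s + \delta K(t,s)V_s$, run the $|W_t|^2$ energy estimate with a single Young parameter $\varepsilon$, apply the weighted Schur bound \eqref{eq_Schur} to both integrals, invoke \Cref{thm:Vt_contraction} to turn the $\delta K$-forcing into the decaying $g(t)$, and close with \Cref{thm_main} and the convolution bounds $(h*h)\le M_h h$, $(h*1)\le\|h\|_1$. The only difference is presentational: you make explicit the AM--GM optimization that identifies \eqref{eq_mu_2gamma2} as exactly the admissibility threshold for $\varepsilon$ (and as a by-product gives $a>0$), a point the paper leaves implicit with the phrase "which was enabled by \eqref{eq_mu_2gamma2}."
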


\begin{proof}
    With synchronized coupling, define $W_t = V_t - \tilde V_t$. It holds that
    \begin{equation*}
        \frac{d}{dt}W_t = -\gamma W_t - \int_0^t \rbracket{K(t, s)V_s - \tilde K(t, s)\tilde V_s} ds.
    \end{equation*}
    From Cauchy-Schwarz and Young's inequality,
    \begin{align}
        \frac{d}{dt}\abs{W_t}^2 & = 2W_t \cdot \frac{d}{dt}W_t = -2\gamma |W_t|^2 - 2 W_t \cdot \int_0^t \rbracket{K(t, s)V_s - \tilde K(t, s)\tilde V_s} ds \notag                                                                   \\
                                & \leq -2\gamma |W_t|^2 + 2\abs{W_t} \cdot \int_0^t \|K(t,s) - \tilde K(t, s)\|\abs{V_s} \, ds + 2\abs{W_t} \cdot \int_0^t \|\tilde K(t, s)\||V_s - \tilde V_s| \, ds \notag                          \\
                                & \leq -2\gamma |W_t|^2 + \frac{2}{\varepsilon}|W_t|^2 + \varepsilon\abs{\int_0^t\norm{\delta K(t, s)}\abs{V_s}ds}^2 + \varepsilon\abs{\int_0^t \|\tilde K(t, s)\||W_s| ds}^2  \label{eq:W_leq_int_int}
    \end{align}
    for some $\varepsilon > 0$ to be determined later.
    Apply the estimation in \eqref{eq_Schur}, it holds
    \begin{equation*}
        \E\abs{\int_0^t \|\tilde K(t, s)\|\abs{W_s} ds}^2 \leq \E\int_0^t \frac{\|\tilde K(t, s)\|^2}{h(t-s)}ds \cdot \int_0^t h(t-s) \abs{W_s}^2ds \leq \nnorm{\tilde K}_h^2\int_0^t h(t-s)\E|W_s|^2ds,
    \end{equation*}
    where the last inequality follows from \Cref{asmp:decay_f2} and the Fubini Theorem. Similarly,
    \begin{equation*}
        \E\abs{\int_0^t\norm{\delta K(t, s)}\abs{V_s}ds}^2
        \leq
        \nnorm{\delta K}_h^2  \int_0^t h(t-s)(C_1 h(s) + c_1)ds \leq \nnorm{\delta K}_h^2(C_1 M_h h(t) + c_1\norm{h}_1),
    \end{equation*}
    where the first inequality follows from \Cref{thm:Vt_contraction} and \Cref{asmp:decay_f2} on $\delta K$ with the constant $C_1$ and $c_1$, and the second follows from the definition of $M_h$ in \eqref{eq_Mh}. At last, \eqref{eq:W_leq_int_int} simplifies to
    \begin{equation*}
        y'(t) \leq -a y(t) + \int_0^t k(t-s)y(s)ds + g(t),
    \end{equation*}
    where $y(t) = \E |W_t|^2$, $a = 2\gamma - \frac{2}{\varepsilon}$, $k(s) = \varepsilon \nnorm{\tilde K}_h^2 h(s)$ and $g(t) =  \varepsilon \nnorm{\delta K}_h^2 (C_1M_h h(t) + c_1\norm{h}_1)$.
    By \Cref{thm_main}, we need to choose $\varepsilon > 0$ such that
    \begin{equation*}
        \mu + 2\gamma - 2/\varepsilon > \varepsilon \nnorm{\tilde K}_h^2 \int_0^{+ \infty} e^{-\mu s}h(s)ds,
    \end{equation*}
    which was enabled by \eqref{eq_mu_2gamma2}. Then it follows
    \begin{equation*}
        y(t) \leq C_0 y(0)k(t) + C_0(k*g)(t),
    \end{equation*}
    where $C_0$ is the constant in \Cref{thm_main} with $k$ and $a$ defined as above.
    Applying the fact $(h*h)(t) \leq M_h h(t)$ again, it holds
    \begin{equation*}
        k*g(t) \leq \varepsilon \nnorm{\tilde K}_h^2 h*g(t) \leq \varepsilon^2 \nnorm{\tilde K}_h^2 \nnorm{\delta K}_h^2 (C_1M_h^2 h(t) + c_1\norm{h}_1^2).
    \end{equation*}
    Therefore, the desired result follows from
    \begin{equation*}
        \E |V_t - \tilde V_t|^2 \leq C_2 h(t) + c_2,
    \end{equation*}
    where
    \begin{equation}\label{eq:C2c2}
        C_2 = \varepsilon C_0 \nnorm{\tilde K}_h^2 (\E|V_0 - \tilde V_0|^2  + \varepsilon C_1M_h^2 \nnorm{\delta K}_h^2) , \quad c_2 = \varepsilon^2 c_1C_0 \nnorm{\tilde K}_h^2 \nnorm{\delta K}_h^2 \norm{h}_1^2.
    \end{equation}
    Note that $C_1, c_1$ are constants in \Cref{thm:Vt_contraction} and $C_0$ is the constant in \Cref{thm_main}.

    It is clear that the constant $c_2 = 0$ if $\sigma = 0$ (hence $c_1 = 0$) or $\delta K = 0$. Suppose the initial value coincides, then both $C_2$ and $c_2$ scale with the kernel error $\nnorm{\delta K}_h$ quadratically. While the constant can be slightly improved by an optimal choice of $\varepsilon$, this does not change the order of dependence on the kernel error. 
\end{proof}

\begin{cor}\label{cor_W2_first_order}
    Under the same condition in \Cref{thm:Wt_contraction}, the 2-Wasserstein distance of the Law of $V_t$ and $\tilde V_t$ follows
    \begin{equation*}
        W_2^2(\Law(V_t), \Law(\tilde V_t)) \lesssim \left(\E|V_0 - \tilde V_0|^2 + \nnorm{\delta K}_h^2\right) + \nnorm{\delta K}_h^2\Tr(\sigma\sigma^\top),
    \end{equation*}
    where the implicit constants are given in \eqref{eq:C2c2}.
\end{cor}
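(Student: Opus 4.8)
The plan is to exploit the fact that the $2$-Wasserstein distance is an infimum over couplings, so that \emph{any} admissible coupling of the one-time marginals yields an upper bound. First I would recall that
\[
    W_2^2\big(\Law(V_t),\Law(\tilde V_t)\big) \;=\; \inf_{\pi}\int_{\R^d\times\R^d} |x-y|^2\,d\pi(x,y),
\]
the infimum being over all probability measures $\pi$ on $\R^d\times\R^d$ with marginals $\Law(V_t)$ and $\Law(\tilde V_t)$. Running \eqref{eq:true} and \eqref{eq:approx} with the same Brownian motion $B_t=\tilde B_t$, as in \Cref{thm:Wt_contraction}, produces for each fixed $t$ a joint random vector $(V_t,\tilde V_t)$ whose two marginals are precisely $\Law(V_t)$ and $\Law(\tilde V_t)$; hence its law is an admissible transport plan and
\[
    W_2^2\big(\Law(V_t),\Law(\tilde V_t)\big) \;\le\; \E\,|V_t-\tilde V_t|^2 .
\]

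Next I would invoke \Cref{thm:Wt_contraction} verbatim: under \Cref{asmp:decay_f2}, \eqref{eq_mu_2gamma}, and \eqref{eq_mu_2gamma2} it gives $\E|V_t-\tilde V_t|^2 \le C_2 h(t) + c_2$ with $C_2,c_2$ as in \eqref{eq:C2c2}. It then only remains to remove the residual time dependence through $h(t)$. Since $h\in\mU(\mu)$ with $\mu\le 0$ is positive and continuous on $\R_+$ and tends to $0$ at infinity (for $\mu<0$ this is the exponential decay built into \Cref{def_exp}; for $\mu=0$ it follows from $h\in L^1(\R^+)$ together with the uniform-ratio property \eqref{eq_U0_3}), the supremum $\|h\|_\infty:=\sup_{t\ge 0}h(t)$ is finite. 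Therefore
\[
    W_2^2\big(\Law(V_t),\Law(\tilde V_t)\big) \;\le\; C_2\|h\|_\infty + c_2 ,
\]
and substituting the explicit expressions for $C_2$ and $c_2$ from \eqref{eq:C2c2} (which are, respectively, linear in $\E|V_0-\tilde V_0|^2+\nnorm{\delta K}_h^2$ up to the factor $\varepsilon C_0 \nnorm{\tilde K}_h^2$, and proportional to $\nnorm{\delta K}_h^2\,\Tr(\sigma\sigma^\top)$) yields exactly the asserted bound, with the implicit constants absorbing $\|h\|_\infty$, $M_h$, $\|h\|_1$, $\varepsilon$, $C_0$, $C_1$, $c_1$, and $\nnorm{\tilde K}_h$.

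I do not expect a genuine obstacle here: the result is an immediate consequence of \Cref{thm:Wt_contraction}. The only two points worth a sentence of justification are (i) that the synchronized coupling of the driving noise induces an admissible transport plan between the time-$t$ marginals — which is automatic, since coupling $B_t=\tilde B_t$ simultaneously couples the laws at every fixed time — and (ii) the uniform boundedness of $h$, which is guaranteed by the regularity and integrability encoded in the class $\mU(\mu)$. If one wishes to track constants sharply, a single line noting $h(t)\le h(0)$ is not available in general (subexponential $h$ need not be monotone), so the bound should be phrased through $\|h\|_\infty$ as above.
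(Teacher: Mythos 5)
Your proof is correct and follows essentially the same route as the paper's: bound the $W_2$ distance by the synchronized coupling and apply \Cref{thm:Wt_contraction}. You simply spell out the step of absorbing the factor $h(t)$ into the implicit constant via $\sup_{t\ge 0} h(t)<\infty$, which the paper leaves implicit in the phrase ``follows directly.''
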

\begin{proof}
    By definition,
    \begin{equation*}
        W_2^2(\Law(V_t), \Law(\tilde V_t)) = \inf_{\pi \in \Pi(\Law(V_t), \Law(\tilde V_t))}\mathbb{E}_{(V_t, \tilde V_t) \sim \pi} |V_t -  \tilde V_t|^2 \leq \E_{\pi'}|V_t - \tilde V_t|^2,
    \end{equation*}
    where $\Pi$ represents all couplings between $\Law(V_t)$ and $\Law(\tilde V_t)$, and $\pi'$ represents the synchronized coupling. The result follows from \Cref{thm:Wt_contraction} directly.
\end{proof}


\section{Error analysis for second-order equations}\label{sec:second_order}
In the presence of an external force, we consider the second-order Volterra-type stochastic differential equation
\begin{equation} \label{eq:second_order_white_noise}
    dX_t = V_t \, dt, \quad dV_t = -\gamma V_t \, dt - u\nabla U(X_t)\, dt - \int_0^t K(t, s)V_s\,ds\,dt + \sigma\,dB_t.
\end{equation}
Here $u, \gamma>0$ are constants. Consider a perturbed system with approximated kernel,
\begin{equation}\label{eq:second_order_white_noise_perturbed}
    d\tilde X_t = \tilde V_t \, dt, \quad d\tilde V_t = -\gamma \tilde V_t \, dt - u\nabla U(\tilde X_t)\, dt - \int_0^t \tilde K(t, s) \tilde V_s \,ds\,dt + \sigma\,d \tilde B_t.
\end{equation}
We will show the contraction of the trajectories $(X_t, V_t)$ and $(\tilde X_t, \tilde V_t)$. We first introduce the assumption on the potential $U$.
\begin{assumption}\label{asmp:U_cvx}
    There exists a positive definite matrix $R \in \R^{d \times d}$ with smallest eigenvalue $\kappa_0 >0$ and a convex function $G:\R^d \to \R$ with $L_G$-Lipschitz continuous gradients, i.e. for all $ x, y \in \R^d,$
    \begin{align}
         & \innerp{\nabla G(x) - \nabla G(y), x-y} \geq 0, \label{eq:G_cvx} \\
         & \abs{\nabla G(x) - \nabla G(y)} \leq L_G\abs{x-y},
    \end{align}
    such that $U(x) = x \cdot Rx/2 + G(x)$ and $\nabla U(x) = Rx + \nabla G(x)$.
\end{assumption}

Note that such an assumption covers the case that $U$ is $\kappa_0$-strongly convex with $L_U$-Lipschitz continuous gradients. In particular, we can set $R = \kappa_0 Id$ so that $G(x) = U(x) - (\kappa_0 /2) |x|^2$, where $Id$ is the $d \times d$ identity matrix. Such a setup follows from \cite{schuh2024global}, which separates the discussion for the parameters.

We will establish a contraction result with the following Lyapunov-type distance function for Langevin dynamics,
$
    r : \mathbb{R}^{2d} \times \mathbb{R}^{2d} \to [0, \infty),
$
where
\begin{equation*}
    r((x,y),(\tilde x,\tilde y)) = \rbracket{\gamma^{-2}u (x-\tilde x)\cdot R(x-\tilde x) + \frac{1}{2} |(1 - 2\lambda)(x - \tilde x) + \gamma^{-1}(y - \tilde y)|^2
        + \frac{1}{2} \gamma^{-2} |y - \tilde y|^2}^{1/2}
\end{equation*}
for $(x,y),(\tilde x,\tilde y) \in \mathbb{R}^{2d}$ with
\begin{equation}\label{eq_lam123}
    \lambda := \min(1/8, \kappa_0 u \gamma^{-2}/2).
\end{equation}
The above definition shall be justified in \Cref{prop_gamma}.  Note that
\begin{equation*}
    r((x, y), (\tilde x, \tilde y))^2 = (x-\tilde x)\cdot A(x-\tilde x) + (x-\tilde x)\cdot B(y-\tilde y) + (y-\tilde y)\cdot C(y-\tilde y).
\end{equation*}
where
\begin{equation}\label{eq_ABC}
    A = \gamma^{-2}u R + (1/2)(1-2\lambda)^2 Id, \quad B = (1-2\lambda)\gamma^{-1}Id, \quad   C = \gamma^{-2}Id.
\end{equation}

The strategy is similar to the first-order case. We first derive the contraction for the solution to equation~\eqref{eq:second_order_white_noise}, and then prove the contraction for the discrepancy between \eqref{eq:second_order_white_noise} and \eqref{eq:second_order_white_noise_perturbed}. We will apply \Cref{thm_main} to the Lyapunov function $r$. The following technical result is introduced in \autocite[Theorem 2.1]{schuh2024global} and also in \autocite[Lemma 2.2]{eberle2019couplings}, which is useful for derivative estimations.

\begin{proposition}\label{prop_gamma}
    Let $X_t, V_t, \tilde X_t, \tilde V_t \in \R^d$ and set $Z_t = X_t - \tilde X_t$, $W_t = V_t - \tilde V_t$. Define
    \begin{align}\label{eq_Gamma}
        \Gamma(X_t, \tilde X_t, V_t, \tilde V_t) = (2A Z_t + BW_t )\cdot W_t + (BZ_t +2CW_t)\cdot[-\gamma W_t - u(\nabla U(X_t) - \nabla U(\tilde X_t))],
    \end{align}
    where $A, B, C$ are fixed matrices defined in \eqref{eq_ABC}. Suppose \Cref{asmp:U_cvx} holds and let $\lambda$ be as in \eqref{eq_lam123}. If in addition,
    \begin{equation}\label{eq:LG}
        L_G u \gamma^{-2} \leq 3/4,
    \end{equation}
    Then for the above choice of $\lambda$, we have for all $t \geq 0$,
    \begin{equation*}
        \Gamma(X_t, \tilde X_t, V_t, \tilde V_t) \leq -2\lambda \gamma (Z_t \cdot A Z_t + Z_t \cdot B W_t + W_t \cdot  C W_t) = -2\lambda \gamma r((X_t, V_t), (\tilde X_t, \tilde V_t))^2.
    \end{equation*}
\end{proposition}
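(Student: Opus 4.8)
The plan is to expand $\Gamma$ explicitly as a quadratic form in $(Z_t,W_t)$ and show that, thanks to the cancellations built into the choice of $A,B,C$ in \eqref{eq_ABC}, the residual $\Gamma + 2\lambda\gamma\, r((X_t,V_t),(\tilde X_t,\tilde V_t))^2$ is nonpositive. First I would substitute $\nabla U(X_t)-\nabla U(\tilde X_t) = RZ_t + \Delta G_t$ with $\Delta G_t := \nabla G(X_t)-\nabla G(\tilde X_t)$ into \eqref{eq_Gamma}. Because the $A$ in \eqref{eq_ABC} contains the summand $\gamma^{-2}uR$, the two occurrences of $\innerp{RZ_t,W_t}$ — one from $2\innerp{AZ_t,W_t}$, one from $-2u\innerp{CW_t,RZ_t}$ — cancel identically. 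Expanding the remaining scalar‑coefficient terms and adding $2\lambda\gamma r^2$, the $\innerp{Z_t,W_t}$ terms should also cancel (this is exactly the role of the $\tfrac12(1-2\lambda)^2\mathrm{Id}$ part of $A$), leaving
\[
\Gamma + 2\lambda\gamma r^2 \;\le\; -\gamma^{-1}|W_t|^2 + u\gamma^{-1}(4\lambda-1)\innerp{Z_t,RZ_t} + \lambda\gamma(1-2\lambda)^2|Z_t|^2 - u(1-2\lambda)\gamma^{-1}\innerp{Z_t,\Delta G_t} - 2u\gamma^{-2}\innerp{W_t,\Delta G_t}.
\]

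Next I would dispose of the two $\Delta G_t$ terms. Since $G$ is convex with $L_G$‑Lipschitz gradient, the Baillon--Haddad co‑coercivity inequality strengthens \eqref{eq:G_cvx} to $\innerp{Z_t,\Delta G_t}\ge L_G^{-1}|\Delta G_t|^2\ge 0$. Bounding the indefinite term by Young's inequality, $-2u\gamma^{-2}\innerp{W_t,\Delta G_t}\le u\gamma^{-2}\varepsilon|W_t|^2 + u\gamma^{-2}\varepsilon^{-1}|\Delta G_t|^2$ with the balanced choice $\varepsilon = L_G/((1-2\lambda)\gamma)$, the $|\Delta G_t|^2$ contributions cancel exactly against $-u(1-2\lambda)\gamma^{-1}L_G^{-1}|\Delta G_t|^2$, at the price of an extra $\tfrac{uL_G}{(1-2\lambda)\gamma^3}|W_t|^2$. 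Using also $\innerp{Z_t,RZ_t}\ge\kappa_0|Z_t|^2$ together with $4\lambda-1<0$ (valid since $\lambda\le 1/8$ by \eqref{eq_lam123}), the whole expression collapses, with no cross term, to
\[
\Gamma + 2\lambda\gamma r^2 \;\le\; \gamma^{-1}\Bigl(\tfrac{u\gamma^{-2}L_G}{1-2\lambda}-1\Bigr)|W_t|^2 \;+\; \Bigl(\lambda\gamma(1-2\lambda)^2 - u\gamma^{-1}(1-4\lambda)\kappa_0\Bigr)|Z_t|^2,
\]
so it remains to check that each of the two coefficients is nonpositive.

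The $|W_t|^2$ coefficient is $\le 0$ iff $u\gamma^{-2}L_G\le 1-2\lambda$, which holds because $\lambda\le 1/8$ forces $1-2\lambda\ge 3/4$ while \eqref{eq:LG} gives $u\gamma^{-2}L_G\le 3/4$. The $|Z_t|^2$ coefficient is $\le 0$ iff $\lambda(1-2\lambda)^2\le\kappa_0 u\gamma^{-2}(1-4\lambda)$, which I would verify by casework on $\lambda=\min(1/8,\kappa_0 u\gamma^{-2}/2)$: if $\lambda=1/8$ the inequality reads $9/128\le(\kappa_0 u\gamma^{-2})/2$, true since then $\kappa_0 u\gamma^{-2}\ge 1/4$; if $\lambda=\kappa_0 u\gamma^{-2}/2$, writing $x:=\kappa_0 u\gamma^{-2}<1/4$, it reduces after dividing by $\tfrac{x}{2}>0$ to $(1-x)^2\le 2(1-2x)$, i.e. $x^2+2x-1\le 0$, which holds for every $x<\sqrt 2-1$. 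This yields $\Gamma + 2\lambda\gamma r^2\le 0$, which is the claim. The main obstacle is the bookkeeping in the expansion: getting the $\innerp{RZ_t,W_t}$ and $\innerp{Z_t,W_t}$ cancellations and the Young's‑inequality balance exactly right so that the constants $1/8$, $3/4$, and the precise form of $\lambda$ align; once co‑coercivity is brought in, the two remaining scalar inequalities are routine.
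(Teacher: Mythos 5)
Your proof is correct and uses the same ingredients as the paper's: the decomposition $\nabla U = R\cdot + \nabla G$, the exact cancellation of the $\innerp{RZ_t,W_t}$ and $\innerp{Z_t,W_t}$ cross terms built into the choice of $A,B,C$, the Baillon--Haddad co-coercivity of $\nabla G$, a Young/Cauchy--Schwarz step on the $\innerp{W_t,\Delta G_t}$ term, and the same two sign checks on the diagonal coefficients under \eqref{eq_lam123} and \eqref{eq:LG}. The only genuine variation is organizational: you convert $\innerp{Z_t,\Delta G_t}$ into $L_G^{-1}|\Delta G_t|^2$ and then tune the Young parameter $\varepsilon=L_G/((1-2\lambda)\gamma)$ so that all $|\Delta G_t|^2$ contributions cancel exactly, whereas the paper uses a fixed $\varepsilon=\gamma^{-1}$ in the Cauchy--Schwarz step and applies co-coercivity in the opposite direction ($|\Delta G_t|^2\le L_G\innerp{Z_t,\Delta G_t}$) to fold the residual back into a single $\innerp{Z_t,\Delta G_t}$ term of nonpositive sign; both routes rely on the same bound $L_G u\gamma^{-2}\le 1-2\lambda$.
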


\begin{proof}
    By \Cref{asmp:U_cvx}, it follows $\nabla U(X_t) - \nabla U(\tilde X_t) = RZ_t + (\nabla G(X_t) - \nabla G(\tilde X_t)). $ Then for the cross terms involving $Z_t$ and $W_t$ in \eqref{eq_Gamma}, it holds
    \begin{equation*}
        2AZ_t \cdot W_t - B Z_t \cdot \gamma W_t - 2 C W_t \cdot u R Z_t = Z_t \cdot (2A - \gamma B - 2u CR) W_t = -2\lambda \gamma Z_t \cdot B W_t.
    \end{equation*}
    Next, by the Cauchy-Schwarz inequality, it holds that
    \begin{equation*}
        2C W_t \cdot \left(u\nabla G(X_t) - u\nabla G(\tilde X_t)\right) \leq \gamma^{-1}\rbracket{|W_t|^2 + \gamma^{-2}u^2\abs{\nabla G(X_t) - \nabla G(\tilde X_t)}^2},
    \end{equation*}
    and together with the $|W_t|^2$ terms in \eqref{eq_Gamma}, it holds
    \begin{equation*}
        BW_t \cdot W_t -2CW_t\cdot \gamma W_t-2C W_t \cdot \left(u\nabla G(X_t) - u\nabla G(\tilde X_t)\right) \leq -2\lambda \gamma W_t\cdot C W_t + \gamma^{-3}u^2\abs{\nabla G(X_t) - \nabla G(\tilde X_t)}^2
    \end{equation*}
    Moreover, for the $Z_t$ terms, it remains to show that
    \begin{equation*}
        -BZ_t \cdot uRZ_t + 2\lambda \gamma  Z_t \cdot A Z_t = Z_t \cdot \sbracket{-u\gamma^{-1}(1-4\lambda)R + \lambda \gamma(1-2\lambda)^2Id}Z_t \leq 0.
    \end{equation*}
    Using \eqref{eq_lam123}, we have
    \begin{equation*}
        -Z_t \cdot u\gamma^{-1}(1-4 \lambda)RZ_t \leq - u\gamma^{-1}(1/2)Z_t \cdot R Z_t \leq -u\gamma^{-1}(1/2)\kappa_0 |Z_t|^2 \leq -\lambda \gamma |Z_t|^2 \leq -\lambda \gamma (1-2\lambda)^2 |Z_t|^2,
    \end{equation*}
    which proves the desired inequality.

    We now turn to the remaining terms. From \cite[Theorem 2.1.5]{nesterov2018lectures}, since $G$ is continuously differentiable, convex and has $L_G$-Lipchitz continuous gradients, $G$ is co-coercive, i.e.
    \begin{equation*}
        \abs{\nabla G(x) - \nabla G(\tilde x)}^2 \leq L_G(\nabla G(x) - \nabla G(\tilde x))\cdot (x - \tilde x), \quad \text{for all } x, \tilde x \in \R^d.
    \end{equation*}
    Therefore,
    \begin{multline*}
        -  BZ_t\cdot u \left(\nabla G(X_t) -  \nabla G(\tilde X_t)\right) + \gamma^{-3}u^2 \abs{\nabla G(X_t) - \nabla G(\tilde X_t)}^2
        \\ \leq
        \gamma^{-1}u\sbracket {\gamma^{-2}u L_G-(1-2\lambda)} Z_t \cdot \left(\nabla G(X_t) - \nabla G(\tilde X_t)\right) \leq 0,
    \end{multline*}
    where the last inequality follows from \eqref{eq:G_cvx} and $\lambda \leq 1/8 \leq (1-\gamma^{-2}uL_G)/2$ because of \eqref{eq_lam123} and \eqref{eq:LG}.
\end{proof}

We are ready to introduce the contraction result of the process $(X_t, V_t)$.
\begin{theorem}\label{thm_second_order_XV_contraction}
    Suppose $(X_t, V_t)$ satisfies \eqref{eq:second_order_white_noise}. If \Cref{asmp:decay_f2}, \Cref{asmp:U_cvx} and \eqref{eq:LG} hold, and moreover,
    \begin{equation}\label{eq_eps_XtVt}
        \mu + 2\gamma\lambda > 2(2\nnorm{K}_h^2 \hat{h}(\mu))^{1/2}.
    \end{equation}
    Then let $R_t =  \,r((X_t, V_t), (x^*, 0))^2$, where $x^*$ is the unique minimum of $U(x)$ with $\nabla U(x^*) = 0$, there exist positive constants $C_3$ and $c_3$ such that for all $t>0$,
    \begin{equation*}
        \E R_t \leq C_3 h(t) + c_3.
    \end{equation*}
\end{theorem}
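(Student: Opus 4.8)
The strategy follows that of \Cref{thm:Vt_contraction}: obtain a closed integro-differential inequality for $y(t):=\E R_t$ and then apply the Volterra comparison theorem \Cref{thm_main}. Set $Z_t = X_t - x^*$ and $W_t = V_t$, so that $R_t = Z_t\cdot AZ_t + Z_t\cdot BW_t + W_t\cdot CW_t$ with $A,B,C$ from \eqref{eq_ABC}. The point $x^*$ is the \emph{unique} minimizer of $U$: by \Cref{asmp:U_cvx}, $x\mapsto x\cdot Rx/2$ is strongly convex and $G$ is convex, so $U$ is strongly convex; in particular $\nabla U(x^*)=0$. Consequently the pair $(Z_t,W_t)$ evolves exactly as in \Cref{prop_gamma} with $(\tilde X_t,\tilde V_t)=(x^*,0)$, except for the additional memory drift $-\int_0^t K(t,s)V_s\,ds$ and the noise $\sigma\,dB_t$.

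Applying It\^o's formula to the quadratic form $R_t$, the drift decomposes as $\Gamma(X_t,x^*,V_t,0)$ --- the quantity bounded in \Cref{prop_gamma} --- plus the memory contribution $-(BZ_t+2CW_t)\cdot\int_0^t K(t,s)V_s\,ds$, while the It\^o correction is $\Tr(C\sigma\sigma^\top)=\gamma^{-2}\Tr(\sigma\sigma^\top)$ and the stochastic integral $(BZ_t+2CW_t)\cdot\sigma\,dB_t$ is a martingale (after the standard localization) that vanishes in expectation. By \Cref{prop_gamma}, $\Gamma(X_t,x^*,V_t,0)\le -2\lambda\gamma R_t$. For the memory term I would apply Young's inequality with a parameter $\varepsilon>0$, splitting off a term of the form $\tfrac{1}{2\varepsilon}|BZ_t+2CW_t|^2$ and a term $\tfrac{\varepsilon}{2}\bigl|\int_0^t K(t,s)V_s\,ds\bigr|^2$. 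The first is dominated by a constant multiple of $R_t$, since the block quadratic form defining $R_t$ (blocks $A$, $B/2$, $B/2$, $C$) is positive definite for the chosen $\lambda$, hence $R_t$ is equivalent to $|Z_t|^2+|W_t|^2$; the second is controlled, via the Schur estimate \eqref{eq_Schur} together with \Cref{asmp:decay_f2}, by $\nnorm{K}_h^2\int_0^t h(t-s)|V_s|^2\,ds$, and using $|V_s|^2=|W_s|^2\lesssim R_s$ this becomes $\lesssim \nnorm{K}_h^2\int_0^t h(t-s)R_s\,ds$.

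Taking expectations and using Fubini yields $y'(t)\le -a\,y(t) + \int_0^t k(t-s)y(s)\,ds + g(t)$, where $a = 2\lambda\gamma - O(1/\varepsilon)$, $k$ is a constant multiple of $\varepsilon\,\nnorm{K}_h^2\,h \in \mU(\mu)$, and $g\equiv \gamma^{-2}\Tr(\sigma\sigma^\top)\ge 0$ is a constant (so $g$ is bounded, continuous and nonnegative, and $y_0=R_0\ge 0$, $y\ge 0$, as required by \Cref{thm_main}). Condition \eqref{eq_eps_XtVt} is precisely what guarantees that $\varepsilon$ can be chosen --- balancing the two $\varepsilon$-dependent contributions --- so that $\mu + a > \hat k(\mu)$. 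Then \Cref{thm_main} gives $y(t)\le C_0\bigl(y_0\,k(t)+(k*g)(t)\bigr)$; since $k$ is a constant multiple of $h$ and $g$ is constant, $(k*g)(t)\le \text{const}\cdot\|h\|_1$, so $\E R_t\le C_3 h(t)+c_3$ with $C_3$ proportional to $\E R_0$ and $c_3$ proportional to $\Tr(\sigma\sigma^\top)$, and in particular $c_3=0$ when $\sigma=0$.

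I expect the main obstacle to be controlling the memory drift without spoiling the contraction from \Cref{prop_gamma}: one must establish the two-sided equivalence $R_t\asymp |Z_t|^2+|V_t|^2$ with explicit constants (so that both $|BZ_t+2CW_t|^2$ and $|V_t|^2$ are dominated by $R_t$) and then verify that the stated threshold \eqref{eq_eps_XtVt}, rather than a larger one, suffices for the choice of $\varepsilon$. A secondary, routine point is justifying via localization (using the a priori bounds generated along the way) that the stochastic integral is a genuine martingale, so that its expectation vanishes.
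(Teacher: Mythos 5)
Your proposal is correct and follows essentially the same route as the paper's proof: It\^o on the Lyapunov quadratic form, \Cref{prop_gamma} for the Markovian drift, the Schur estimate \eqref{eq_Schur} combined with Young's inequality for the memory term, and \Cref{thm_main} to close the resulting Volterra inequality. The bookkeeping concern you raise is resolved by applying Young's inequality after writing $BZ_t+2CW_t=\gamma^{-1}\bigl[((1-2\lambda)Z_t+\gamma^{-1}W_t)+\gamma^{-1}W_t\bigr]$ and pairing each summand separately with $\int_0^tK(t,s)\gamma^{-1}W_s\,ds$, which gives $\tfrac1\varepsilon R_t + 2\varepsilon\nnorm{K}_h^2\int_0^t h(t-s)R_s\,ds$ with no slack (using $\tfrac12(|a_1|^2+|a_2|^2)\le R_t$ and $\gamma^{-2}|W_s|^2\le 2R_s$ from the definition of $r$); one can check that your cruder split, after optimizing $\varepsilon$, produces the identical threshold \eqref{eq_eps_XtVt}.
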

\begin{proof}
    Let $Z_t = X_t - x^*$ and $W_t = V_t$, so that
    \begin{equation*}
        R_t = Z_t \cdot A Z_t + Z_t \cdot B W_t + W_t \cdot  C W_t.
    \end{equation*}
    Therefore, from the It\^{o}'s formula,
    \begin{equation*}
        dR_t = 2 dZ_t \cdot A Z_t + dZ_t \cdot B W_t + Z_t \cdot B d W_t + 2  W_t \cdot C dW_t + \Tr(C \sigma \sigma^\top) dt.
    \end{equation*}
    Note that $dZ_t = dX_t$ and $dW_t = dV_t$, hence from \eqref{eq:second_order_white_noise},
    \begin{align*}
        dR_t     = & \,(2A Z_t + BW_t )\cdot W_t dt + (BZ_t +2CW_t)\cdot \left(-\gamma W_t  - u (\nabla U(X_t) - \nabla U (x^*))- \int_0^t K(t, s) W_s \, ds \right)dt \\
                   & +\Tr(C \sigma \sigma^\top) dt + (BZ_t +2CW_t)\cdot\sigma dB_t.
    \end{align*}
    Separate the Markovian term and denote using $\Gamma$ as in \eqref{eq_Gamma}, we have
    \begin{equation*}
        dR_t = \sbracket{\Gamma(X_t, x^*, V_t, 0)  - \left(B Z_t + 2 C W_t  \right)\cdot \int_0^tK(t, s)W_s ds + \Tr(C \sigma \sigma^\top)}dt + \left(B Z_t + 2 C W_t  \right)\cdot \sigma dB_t.
    \end{equation*}
    By \Cref{prop_gamma}, we have
    \begin{equation*}
        \Gamma(X_t, x^*, V_t, 0) \leq -2 \lambda \gamma R_t.
    \end{equation*}


    Furthermore, for the memory term, separate the summation and apply Young's inequality,
    \begin{align*}
        \left(B Z_t + 2 C W_t  \right)\cdot \int_0^tK(t, s)W_s ds
         & = \big[((1-2\lambda)Z_t + \gamma^{-1}W_t) + \gamma^{-1}W_t\big]\cdot \int_0^tK(t, s) \gamma^{-1}W_sds                                                        \\
         & \leq \frac{1}{2\varepsilon} \rbracket{|(1-2\lambda)Z_t + \gamma^{-1}{W_t}|^2 + \gamma^{-2} |W_t|^2} + \varepsilon \abs{\int_0^t K(t, s) \gamma^{-1}W_s ds}^2 \\
         & \leq \frac{1}{\varepsilon} R_t + 2\varepsilon \nnorm{K}_h^2\int_0^t h(t-s) R_s ds,
    \end{align*}
    where in the last inequality, the first term follows from the definition of $R_t$, and the second term from the \Cref{asmp:decay_f2}, similar steps in \eqref{eq:KV2_leq_MKKV2} and the fact that $\gamma^{-2}\abs{W_t}^2 \leq 2R_t$.

    Finally, take expectation and let $r(t) = \E R_t, $ it follows that
    \begin{equation*}
        r'(t) \leq -2\lambda \gamma r(t) + \frac{1}{\varepsilon} r(t) + 2 \varepsilon \nnorm{K}_h^2 \int_0^t h(t-s) r(s) ds + \gamma^{-2}\Tr(\sigma \sigma^\top)
    \end{equation*}
    Take $a = 2\lambda\gamma - \frac{1}{\varepsilon}$, $k(t) = 2\varepsilon \nnorm{K}_h^2h(t)$ and $g(t) = \gamma^{-2}\Tr(\sigma \sigma^\top)$. We choose $\varepsilon  = (2\nnorm{K}_h^2 \hat h(\mu))^{-1/2}$, therefore by \eqref{eq_eps_XtVt}, we can apply \Cref{thm_main}, and it follows with appropriately constructed $C_0$ that
    \begin{equation*}
        r(t) \leq C_0 r(0)k(t) + C_0(k*g)(t) = C_0 r(0) 2 \varepsilon \nnorm{K}_h^2 h(t) + C_0 2 \varepsilon \nnorm{K}_h^2 \gamma^{-2} \Tr(\sigma \sigma^\top)\norm{h}_1,
    \end{equation*}
    where the last term follows since $g$ is a constant. Taking into account the value of $\varepsilon$, the desired result follows, where
    \begin{equation*}
        C_3 = \sqrt{2}C_0 \E R_0 \nnorm{K}_h(\hat h(\mu))^{-1/2}, \quad c_3 = \sqrt{2}C_0 \nnorm{K}_h(\hat h(\mu))^{-1/2} \gamma^{-2}\Tr(\sigma \sigma^\top) \norm{h}_1.
    \end{equation*}
    It is clear that $c_3 = 0$ if $\sigma = 0$.
    %
    %
\end{proof}

We shall now consider the difference between \eqref{eq:second_order_white_noise} and its perturbation \eqref{eq:second_order_white_noise_perturbed}.

\begin{theorem}\label{thm:error_contraction_second}
    Suppose $(X_t, V_t)$ and $(\tilde X_t, \tilde V_t)$ satisfy the equations \eqref{eq:second_order_white_noise}, and \eqref{eq:second_order_white_noise_perturbed}. If \Cref{asmp:decay_f2}, \Cref{asmp:U_cvx}, \eqref{eq:LG} and \eqref{eq_eps_XtVt} hold, and moreover,
    \begin{equation}\label{eq_eps_XtVt_2}
        \mu + 2\gamma \lambda> 4(\nnorm{\tilde K}_h^2 \hat{h}(\mu))^{1/2}.
    \end{equation}
    Then let $R_t =  \,r((X_t, V_t), (\tilde X_t, \tilde V_t))^2$ and assume synchronized coupling $B_t = \tilde B_t$, it holds for all $t > 0$ that
    \begin{equation*}
        \E R_t \lesssim \left(\E R_0 + \nnorm{K - \tilde K}^2_h\right)h(t) + \nnorm{K - \tilde K}^2_h \Tr(\sigma \sigma^\top),
    \end{equation*}
    where the constant depends on $h$, $\tilde K$, $\gamma$, $U$ and $K$. See \eqref{eq:C4c4} for explicit constants.
\end{theorem}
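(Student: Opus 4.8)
The plan is to follow the template of \Cref{thm:Wt_contraction}, replacing the Euclidean distance by the hypocoercive Lyapunov function $r$ and the elementary dissipation estimate by \Cref{prop_gamma}. Write $Z_t = X_t-\tilde X_t$ and $W_t = V_t-\tilde V_t$, so that $R_t = Z_t\cdot A Z_t + Z_t\cdot B W_t + W_t\cdot C W_t$ with $A,B,C$ as in \eqref{eq_ABC}. Under the synchronized coupling $B_t=\tilde B_t$ the noise cancels in the difference, so $dZ_t = W_t\,dt$ and $dW_t$ has no martingale part; consequently $t\mapsto R_t$ is absolutely continuous pathwise, with
\[
\frac{d}{dt}R_t = \Gamma(X_t,\tilde X_t,V_t,\tilde V_t) - (B Z_t + 2 C W_t)\cdot\int_0^t\bigl(K(t,s)V_s - \tilde K(t,s)\tilde V_s\bigr)\,ds ,
\]
where $\Gamma$ is the quantity in \eqref{eq_Gamma}. \Cref{prop_gamma} (applicable since \Cref{asmp:U_cvx} and \eqref{eq:LG} hold) bounds the first term by $-2\lambda\gamma R_t$, which supplies the contraction.

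Next I would handle the memory term by the same splitting as in the first-order proof, $K(t,s)V_s - \tilde K(t,s)\tilde V_s = \tilde K(t,s)W_s + \delta K(t,s)V_s$. For the contraction piece $(B Z_t + 2 C W_t)\cdot\int_0^t\tilde K(t,s)W_s\,ds$ I would reuse verbatim the manipulation from the proof of \Cref{thm_second_order_XV_contraction}: decompose $B Z_t + 2 C W_t = \gamma^{-1}\bigl[((1-2\lambda)Z_t+\gamma^{-1}W_t) + \gamma^{-1}W_t\bigr]$, apply Young's inequality with a parameter $\varepsilon>0$, and invoke the weighted Schur estimate \eqref{eq_Schur} together with \Cref{asmp:decay_f2} to obtain a bound of the form $\tfrac{c_\gamma}{\varepsilon}R_t + 2\varepsilon\nnorm{\tilde K}_h^2\int_0^t h(t-s)R_s\,ds$. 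For the source piece $(B Z_t + 2 C W_t)\cdot\int_0^t\delta K(t,s)V_s\,ds$, Young's inequality and \eqref{eq_Schur} give an upper bound $\tfrac{c_\gamma'}{\varepsilon}R_t + \varepsilon\nnorm{\delta K}_h^2\int_0^t h(t-s)|V_s|^2\,ds$; the crucial extra input here is an a priori second-moment bound on the \emph{true} velocity, which is exactly \Cref{thm_second_order_XV_contraction} — available because \eqref{eq_eps_XtVt} is assumed — combined with the observation that $r$ is a positive-definite quadratic form, so $\E|V_s|^2 \lesssim \E\,r((X_s,V_s),(x^*,0))^2 \lesssim h(s)+c_3$ with $c_3\propto\Tr(\sigma\sigma^\top)$ the constant from that theorem.

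Taking expectations and setting $y(t)=\E R_t$, the estimates above assemble into an integro-differential inequality $y'(t)\le -a\,y(t) + \int_0^t k(t-s)y(s)\,ds + g(t)$ with $a = 2\lambda\gamma - (c_\gamma+c_\gamma')/\varepsilon$, $k(t) = 2\varepsilon\nnorm{\tilde K}_h^2 h(t)$, and, after using $h*h\le M_h h$, a forcing $g(t)\lesssim \varepsilon\nnorm{\delta K}_h^2\bigl(M_h h(t) + \|h\|_1\,\Tr(\sigma\sigma^\top)\bigr)$. Choosing $\varepsilon$ of order $(\nnorm{\tilde K}_h^2\hat h(\mu))^{-1/2}$, the Volterra-comparison hypothesis $\mu + a > \hat k(\mu) = 2\varepsilon\nnorm{\tilde K}_h^2\hat h(\mu)$ of \Cref{thm_main} is implied by the assumed inequality \eqref{eq_eps_XtVt_2}. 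Applying \Cref{thm_main} gives $y(t)\le C_0\bigl(y(0)k(t) + (k*g)(t)\bigr)$, and one further use of $h*h\le M_h h$ on the $h(t)$-part of $g$ together with $(h*1)(t)\le\|h\|_1$ on its constant part yields $\E R_t \le C_4 h(t) + c_4$ with $C_4 \propto \E R_0 + \nnorm{\delta K}_h^2$ and $c_4 \propto \nnorm{\delta K}_h^2\,\Tr(\sigma\sigma^\top)$, i.e.\ the claimed bound; tracking the accumulated factors of $\varepsilon$, $M_h$, $\|h\|_1$, $C_0$ and the positive-definiteness constant of $r$ gives the explicit $C_4,c_4$.

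The constant bookkeeping is routine. The one delicate point — and the main obstacle — is that the memory-term splitting must respect the hypocoercive structure: the contraction piece has to be measured against $\tilde K$ (so that \eqref{eq_eps_XtVt_2}, a condition on the \emph{learned} kernel, governs the decay rate), whereas the source piece is measured against $\delta K$ but driven by $V_s$ rather than $W_s$, which forces the detour through \Cref{thm_second_order_XV_contraction} and hence the auxiliary hypothesis \eqref{eq_eps_XtVt}. As in the first-order case this produces the quadratic dependence of both $C_4$ and $c_4$ on $\nnorm{\delta K}_h$, and optimizing $\varepsilon$ would sharpen the constants but not the order of dependence.
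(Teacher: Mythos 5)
Your proposal is correct and follows essentially the same route as the paper: the hypocoercive quadratic form $R_t$, the noise cancellation under synchronized coupling, the splitting $K(t,s)V_s-\tilde K(t,s)\tilde V_s = \tilde K(t,s)W_s + \delta K(t,s)V_s$, \Cref{prop_gamma} for the Markovian dissipation, the weighted Schur estimate plus Young's inequality, the a priori moment bound on $V$ from \Cref{thm_second_order_XV_contraction} (which is exactly where \eqref{eq_eps_XtVt} enters), and closure via \Cref{thm_main} under \eqref{eq_eps_XtVt_2}. Your placeholder $(c_\gamma+c_\gamma')/\varepsilon$ is the paper's explicit $2/\varepsilon$, coming from $|(1-2\lambda)Z_t+\gamma^{-1}W_t|^2+\gamma^{-2}|W_t|^2\le 2R_t$; otherwise the two arguments are the same.
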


\begin{proof}

    Denote the difference $Z_t = X_t - \tilde X_t$ and $W_t = V_t - \tilde V_t$. Let $B_t = \tilde B_t$, it follows that
    \begin{equation*}
        \left\{
        \begin{aligned}
            dZ_t & = W_t \, dt,                                                                                                                                                                       \\
            dW_t & = -\gamma W_t \, dt - uRZ_t - \left(u\nabla G(X_t) - u\nabla G(\tilde X_t)\right) \, dt - \left( \int_0^t K(t, s) V_s \, ds - \int_0^t \tilde K(t, s) \tilde V_s \, ds \right) dt.
        \end{aligned}
        \right.
    \end{equation*}
    Then we have
    \begin{align*}
        \frac{d}{dt}R_t = & (BZ_t +2CW_t)\cdot \sbracket{-\gamma W_t  - u (\nabla U(X_t) - \nabla U (\tilde X_t))- \rbracket{\int_0^t K(t, s) V_s \, ds - \int_0^t \tilde K(t, s) \tilde V_s \, ds}} \\
                          & +\,(2A Z_t + BW_t )\cdot W_t  +\Tr(C \sigma \sigma^\top).
    \end{align*}
    Separate the Markovian term and denote using $\Gamma$ as in \eqref{eq_Gamma}, we have
    \begin{equation*}
        \frac{d}{dt}R_t = \Gamma(X_t, \tilde X_t, V_t, \tilde V_t)  - \left(B Z_t + 2 C W_t  \right)\cdot \rbracket{\int_0^t K(t, s) V_s \, ds - \int_0^t \tilde K(t, s) \tilde V_s \, ds} +\gamma^{-2}\Tr( \sigma \sigma^\top).
    \end{equation*}
    By \Cref{prop_gamma}, it holds that
    \begin{equation*}
        \Gamma(X_t, \tilde X_t, V_t, \tilde V_t) \leq -2\lambda \gamma R_t.
    \end{equation*}

    Moreover, for the memory term,
    \begin{align*}
             &
        \left(B Z_t + 2 C W_t  \right)\cdot \rbracket{\int_0^t K(t, s) V_s \, ds - \int_0^t \tilde K(t, s) \tilde V_s \, ds}                                                                                   \\
        \leq &
        \big[ ((1-2\lambda)Z_t + \gamma^{-1}{W_t}) + \gamma^{-1} W_t\big]\cdot \left(\abs{\int_0^t {\delta K(t, s)}{\gamma^{-1}V_s} \, ds}  + \abs{\int_0^t {\tilde K(t, s)}  {\gamma^{-1} W_s} \, ds} \right) \\
        \leq &
        \frac{1}{\varepsilon}\big[|(1-2\lambda)Z_t + \gamma^{-1}{W_t}|^2 + \gamma^{-2} |W_t|^2\big] + \varepsilon \left(\abs{\int_0^t {\delta K(t, s)}{\gamma^{-1}V_s} \, ds}^2  + \abs{\int_0^t {\tilde K(t, s)}  {\gamma^{-1} W_s} \, ds}^2 \right)
    \end{align*}
    Take expectation and let $r(t) = \E R_t$. Recall the similar estimate as in \eqref{eq_Schur}, it holds
    \begin{equation*}
        \E\abs{\int_0^t \tilde K(t, s)  \gamma^{-1}W_s \, ds}^2\leq \nnorm{\tilde K}_h^2 \int_0^t h(t-s) \E | \gamma^{-1}W_s|^2ds\leq 2\nnorm{\tilde K}_h^2 \int_0^t h(t-s) r(s) ds.
    \end{equation*}
    Moreover, \Cref{thm_second_order_XV_contraction} suggests that
    \begin{equation*}
        \E |\gamma^{-1}V_t|^2 \leq 2\E r((X_t, V_t), (x^*, 0))^2 \leq 2C_3h(t) + 2c_3.
    \end{equation*}
    Therefore,
    \begin{align*}
        \E \abs{\int_0^t \delta K(t, s)  \gamma^{-1}V_s \, ds}^2 & \leq \nnorm{\delta K}_h^2 \int_0^t h(t-s) \E| \gamma^{-1}V_s|^2 ds
        \leq \nnorm{\delta K}_h^2 \int_0^t h(t-s) (2C_3 h(s) + 2c_3)ds                                                                \\
                                                                 & \leq \nnorm{\delta K}_h^2 (2C_3 M_h h(t) + 2c_3\norm{h}_1)ds.
    \end{align*}


    We finally achieve
    \begin{equation*}
        r'(t) \leq -2\lambda \gamma r(t) + \frac{2}{\varepsilon} r(t) + 2 \varepsilon \nnorm{\tilde K}_h^2 \int_0^t h(t-s) r(s)ds + 2 \varepsilon \nnorm{\delta K}_h^2(C_3M_h h(t) + c_3 \norm{h}_{1}).
    \end{equation*}
    Denote $a = 2\lambda \gamma - 2/\varepsilon$, $k(t) = 2\varepsilon \nnorm{\tilde K}_h^2 h(t)$ and $g(t) = 2 \varepsilon \nnorm{\delta K}_h^2(C_3M_h h(t) + c_3 \norm{h}_{1})$. By \Cref{thm_main}, if we take $\varepsilon$ so that
    \begin{equation*}
        \mu + 2\gamma \lambda - 2/\varepsilon > 2\varepsilon \nnorm{\tilde K}_h^2 \int_0^\infty e^{\mu s} h(s) ds,
    \end{equation*}
    which was enabled by \eqref{eq_eps_XtVt_2}, then with appropraitely constructed $C_0$, it holds
    \begin{equation*}
        r(t) \leq C_0 (r(0)k(t) + k*g(t)).
    \end{equation*}
    We note that
    \begin{equation*}
        k*g(t) \leq 4 \varepsilon^2 \nnorm{\tilde K}_h^2 \nnorm{\delta K}_h^2 \left( C_3 M_h^2 h(t) + c_3 \norm{h}_1^2\right).
    \end{equation*}
    Therefore, the desired result follows from
    \begin{equation*}
        \E R_t \leq C_4 h(t) + c_4,
    \end{equation*}
    where
    \begin{equation}\label{eq:C4c4}
        C_4 = 2\varepsilon C_0 \nnorm{\tilde K}_h^2 \left(\E R_0 + 2\varepsilon C_3 M_h^2 \nnorm{\delta K}_h^2\right), \quad c_4 = 4\varepsilon^2c_3C_0   \nnorm{\tilde K}_h^2 \nnorm{\delta K}_h^2 \|h\|_1^2.
    \end{equation}
    It is clear that $c_4 = 0$ if $\sigma = 0$, as $c_3 = 0$. Moreover, assume identical initial values $X_0 = \tilde X_0$ and $V_0 = \tilde V_0$, both $C_4$ and $c_4$ scale with $\nnorm{\delta K}_h$ quadratically.
\end{proof}

The following Corollary is a natural consequence. The proof is similar to \Cref{cor_W2_first_order}
and hence omitted.
\begin{cor}
    Under the same condition in \Cref{thm:error_contraction_second}, the 2-Wasserstein distance of the Law of $(X_t, V_t)$ and $(\tilde X_t, \tilde V_t)$ follows
    \begin{equation*}
        W_2^2(\Law(X_t, V_t), \Law(\tilde X_t, \tilde V_t)) \lesssim \left(\E r((X_0, V_0), (\tilde X_0, \tilde V_0))^2 + \nnorm{K - \tilde K}^2_h\right)h(t) + \nnorm{K - \tilde K}^2_h \Tr(\sigma \sigma^\top),
    \end{equation*}
    where the implicit constants are defined in \eqref{eq:C4c4}.
\end{cor}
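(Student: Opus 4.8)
The plan is to mirror the proof of \Cref{cor_W2_first_order}: use the variational characterization of the $2$-Wasserstein distance, evaluate it at the synchronized coupling $B_t=\tilde B_t$, and then pass from the Lyapunov-type quadratic form $r$ to the Euclidean transport cost. Since under the synchronized coupling the pair $\big((X_t,V_t),(\tilde X_t,\tilde V_t)\big)$ is an admissible element of $\Pi\big(\Law(X_t,V_t),\Law(\tilde X_t,\tilde V_t)\big)$, the infimum defining $W_2^2$ is dominated by the expected cost along this particular coupling, giving
\[
W_2^2\big(\Law(X_t,V_t),\Law(\tilde X_t,\tilde V_t)\big)\;\le\;\E\big[\,|X_t-\tilde X_t|^2+|V_t-\tilde V_t|^2\,\big].
\]

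The second step is to control the right-hand side by $\E R_t$. Writing $Z=x-\tilde x$, $W=y-\tilde y$, the definition of $r$ reads $r\big((x,y),(\tilde x,\tilde y)\big)^2=\gamma^{-2}u\,Z\cdot RZ+\tfrac12\big|(1-2\lambda)Z+\gamma^{-1}W\big|^2+\tfrac12\gamma^{-2}|W|^2$, a sum of nonnegative terms. Discarding the middle term and using that the smallest eigenvalue of $R$ is $\kappa_0>0$ (per \Cref{asmp:U_cvx}) yields $|Z|^2\le \gamma^2(u\kappa_0)^{-1}r^2$ and $|W|^2\le 2\gamma^2 r^2$, hence the norm equivalence $|Z|^2+|W|^2\le \gamma^2\big((u\kappa_0)^{-1}+2\big)\,r^2$. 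Applying this pointwise along the coupled trajectories with $Z=Z_t$, $W=W_t$ and taking expectations gives
\[
W_2^2\big(\Law(X_t,V_t),\Law(\tilde X_t,\tilde V_t)\big)\;\le\;\gamma^2\big((u\kappa_0)^{-1}+2\big)\,\E R_t .
\]

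Finally I would invoke \Cref{thm:error_contraction_second}, whose hypotheses are assumed, to substitute $\E R_t\le C_4 h(t)+c_4$ with $C_4,c_4$ as in \eqref{eq:C4c4}; since $\E R_0=\E\,r\big((X_0,V_0),(\tilde X_0,\tilde V_0)\big)^2$ and $c_4$ carries the factor $\Tr(\sigma\sigma^\top)$ through $c_3$, absorbing the extra multiplicative constant $\gamma^2\big((u\kappa_0)^{-1}+2\big)$ into the implicit constant produces exactly the claimed bound $W_2^2\lesssim\big(\E\,r((X_0,V_0),(\tilde X_0,\tilde V_0))^2+\nnorm{K-\tilde K}_h^2\big)h(t)+\nnorm{K-\tilde K}_h^2\Tr(\sigma\sigma^\top)$. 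I do not anticipate any real obstacle here: the only point requiring a moment's care is the norm-equivalence constant relating $r$ to the Euclidean distance, which is immediate from the explicit block structure of $r$ and \Cref{asmp:U_cvx}; the remainder is a direct quotation of \Cref{thm:error_contraction_second} together with the definition of $W_2$, just as in \Cref{cor_W2_first_order}.
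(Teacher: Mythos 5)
Your proposal is correct and follows the same route the paper indicates (it omits the proof, remarking only that it is ``similar to \Cref{cor_W2_first_order}''): evaluate the infimum defining $W_2^2$ at the synchronized coupling and then invoke \Cref{thm:error_contraction_second}. The one extra step you supply — the norm equivalence $|Z|^2+|W|^2\le \gamma^2\big((u\kappa_0)^{-1}+2\big)\,r((x,y),(\tilde x,\tilde y))^2$, obtained by dropping the nonnegative middle block of $r^2$ and using $R\succeq\kappa_0\,Id$ — is exactly what is needed if $W_2$ is taken with the Euclidean cost, and your derivation of it is correct. The only thing worth flagging is that your argument produces the additional multiplicative factor $\gamma^2\big((u\kappa_0)^{-1}+2\big)$ in front of $C_4$ and $c_4$, whereas the paper's phrasing ``the implicit constants are defined in \eqref{eq:C4c4}'' reads as though the constants are precisely $C_4,c_4$; this is consistent either with the paper absorbing the extra factor into the $\lesssim$, or with $W_2$ being understood with respect to the Lyapunov cost $r^2$ (in which case the norm-equivalence step is unnecessary and the proof is verbatim that of \Cref{cor_W2_first_order}). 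Either way your argument is sound, and if anything it is the more careful of the two readings.
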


\begin{remark}
    The conditions \eqref{eq_lam123} and \eqref{eq:LG} are satisfied when the friction coefficient \(\gamma\) is large, corresponding to the overdamped regime of the Langevin equation. However, numerical results suggest that contraction may also occur for moderate values of \(\gamma\). This might be a result of the coupling technique applied in the proof, as PDE techniques \cite{cao2023explicit} can establish contraction behavior in the underdamped regime. Extending such quantitative estimates to generalized Langevin equations with memory kernels remains an important and challenging open problem.
\end{remark}

\section{Numerical Examples}\label{sec:numerics}
We first illustrate \Cref{thm_main} numerically. Then we validate the error convergence rate of the first-order (\Cref{thm:Wt_contraction}) and second-order GLE (\Cref{thm:error_contraction_second}).

\subsection{Preliminary results}
\paragraph{Subexponential kernels}

Consider \Cref{ex_powerlaw} in one dimension with
\[
    k(t) = c(t+\alpha)^{-\beta}.
\]
Suppose \(x(t)\) solves \eqref{eq:Volterra} with \(g(t) \equiv 0\) and
\[
    a > \frac{c \alpha^{1-\beta}}{\beta - 1} =: f(\beta).
\]
Then, by \Cref{thm_main}, we have \(x(t) \lesssim (t+\alpha)^{-\beta}\).
In the following experiment, we set \(\alpha = 0.1\), \(c=1\), vary
\(\beta \in [1.05,20]\) and \(a \in [5,50]\), and generate trajectories
for \(t \in [0,100]\). For large \(t \in [5, 100]\), we fit \(x(t) \sim t^{-r}\) and
compare the empirical decay \(r\) with the theoretical exponent \(\beta\).
The results are summarized in \Cref{fig:1}, which confirms that
the empirical decay agrees with the theory whenever \(a\) exceeds the
threshold \(f(\beta)\). For parameter pairs with $a<f(\beta)$, the least-squares fit yields unreliable exponents, so we set the ratio $r / \beta$ to zero in the plot. 

\begin{figure}[t]
    \centering
    \includegraphics[width=0.45\textwidth]{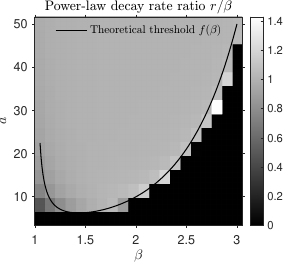}
    \
    \includegraphics[width=0.40\textwidth]{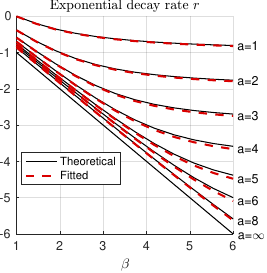}
    \caption{\textbf{Left}: Ratio \(r/\beta\) between fitted and theoretical decay rates
        for trajectories with power-law kernel \(k(t) = (t+0.1)^{-\beta}\).
        Optimal rate (\(r/\beta=1\)) is observed whenever \(a\) exceeds the threshold
        \(f(\beta)\). \textbf{Right}: Comparison of theoretical (black solid) and fitted (red dashed) exponential decay rates with kernel $k(t) = e^{-\beta t}$. Fitted decay rates match the theoretical rates for a large range of $\beta$ and $a$.}
    \label{fig:1}
\end{figure}

\paragraph{Exactly exponential kernels}
Consider \Cref{ex_exp} in one dimension with
\begin{equation*}
    k(t) = ce^{-\beta t}.
\end{equation*}
Suppose $x(t)$ solves \eqref{eq:Volterra} with $g(t) \equiv 0$. Then according to \Cref{thm_main}, the solution has an exponential decay rate of
\begin{equation*}
    p(a, \beta):= \frac{1}{2}[-(a + \beta) + \sqrt{(a + \beta)^2 -4(a \beta - c))}].
\end{equation*}
In the following experiment, we set $c = 1$, vary $\beta \in [1, 6]$, $a \in [1, 10]$ and generate trajectories for $t \in [0, 100]$. For large $t$, we fit $x(t) \sim \exp(-rt)$ and compare the empirical decay rate $r$ with the theoretical rate $p(a, \beta)$. The result is presented in \Cref{fig:1}. The fitted decay rates closely match the theoretical prediction $p(a, \beta)$ across a wide range of $\beta$ and $a$.

\subsection{First-order GLE}\label{sec_5_2}

\begin{figure}[t]
    \centering
    \includegraphics[width=\textwidth]{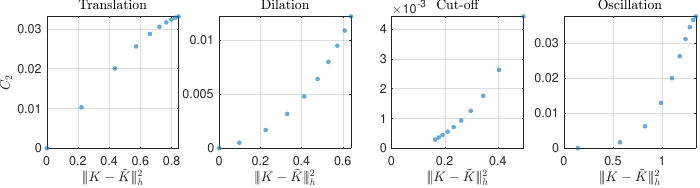}
    \includegraphics[width=\textwidth]{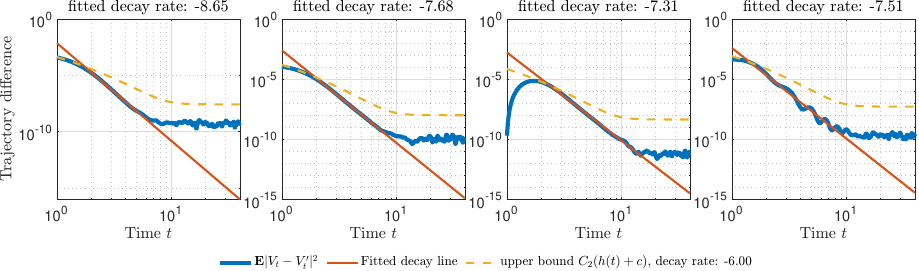}
    \caption{Numerical example for the first-order equation with a power-law kernel \(k(t) = (1+t)^{-4}\) in dimension \(d=1\). \textbf{Top:} Scatter plots showing the relationship between the empirical error upper-bound constant $C_2$ estimated in \eqref{eq:M}, and the squared kernel perturbation error $\nnorm{K- \tilde K}_h^2$ under four types of perturbations: translation, dilation, cut-off, and oscillation. Translation perturbations preserve the kernel decay rate, yielding an approximately linear relation consistent with \Cref{thm:Wt_contraction}. In contrast, dilation and cut-off perturbations modify the kernel decay behavior, leading to a higher-order decrease in $C_2$ with increasing kernel error. Such an effect also appears in oscillation, as it introduces negative memory. \textbf{Bottom:} The decay rate of the trajectory difference, obtained by fitting in the log-log scale for representative trajectories, is consistently close to 8, which is the decay rate of the square of $K(t)$. This suggests that the selected function $h(t) = (1+t)^{-6}$ provides an appropriate and reliable upper bound for the observed decay.}
    \label{fig:kernel_perturbations}
\end{figure}

In this example, we consider the one-dimensional ($d = 1$) first-order GLE \eqref{eq:first_order_gle} with perturbed equation \eqref{eq:approx}.  Suppose the true kernel is translation-invariant and is given by
\begin{equation*}
    K(t) = (t + 1)^{-4}.
\end{equation*}
We assume four kinds of kernel perturbations,
\begin{align*}
    \text{Translation:}\quad & \tilde K_1(t)=(t+1+\alpha)^{-4},                   & \alpha \in [1, 3];   \\
    \text{Cut-off:}\quad     & \tilde K_3(t)=(t+1)^{-4}\,\mathbf 1_{t\le \alpha}, & \alpha \in [0.1, 3]; \\
    \text{Dilation:}\quad    & \tilde K_2(t)=(t+1)^{-4-\alpha},                   & \alpha \in [0, 1];   \\
    \text{Oscillation:}\quad & \tilde K_4(t)=(t+1)^{-4}\cos(\alpha t),            & \alpha \in [0.1, 4].
\end{align*}
Translation corresponds to mis-specified short-term memory magnitude while preserving long-term effects. Dilation reflects an overestimation of the power-law decay rate. The cut-off represents a finite-range approximation of the kernel, and oscillation errors typically arise from frequency-based methods, such as inverse Laplace transforms.

Let $\gamma = 3$ and $\sigma = 10^{-3}$. We choose $h(t) = (1+t)^{-6}$ and generate trajectories for 20 independent batches, each initialized with random but identical initial conditions for both the true and the estimated trajectories. From these simulations, we obtain an empirical estimate of
\begin{equation}\label{eq:M}
    C_2 = \sup_{t > 0}\frac{\E |V_t - \tilde V_t|^2}{h(t) + \Tr(\sigma \sigma^\top)}.
\end{equation}
According to \Cref{thm:Wt_contraction}, $C_2$ is expected to depend linearly on the squared kernel error $\nnorm{K - \tilde K}_h^2$. For various values of $\alpha$ in the kernel perturbation, we present the convergence of the constants $C_2$ with the squared kernel error $\nnorm{K - \tilde K}_h^2$ in the first row of \Cref{fig:kernel_perturbations}.  Although the actual decay rate of the trajectory error may exceed $h$, this choice guarantees that \eqref{eq_mu_2gamma2} holds uniformly across all perturbed kernels, providing a common reference for comparing $C_2$ across experiments. The trajectory decay rate is fitted and presented in the second row of \Cref{fig:kernel_perturbations}.
%
%

\subsection{Second-order GLE}

\begin{figure}[t]
    \centering
    \includegraphics[width=\textwidth]{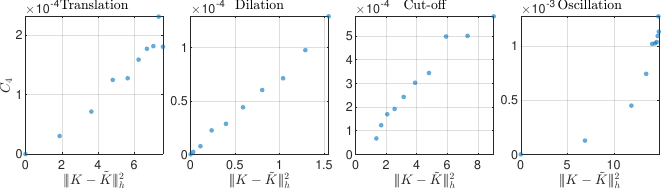}
    \includegraphics[width=\textwidth]{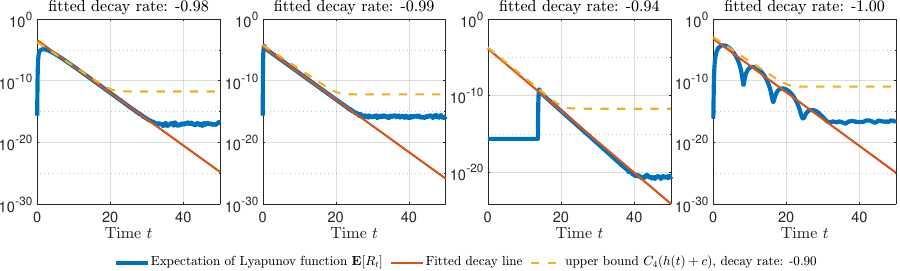}
    \caption{Numerical experiments for three-dimensional $(d = 3)$ second-order dynamics with a matrix-valued kernel exhibiting exponential decay at rate 0.5.
        \textbf{Top:} Scatter plots showing the relationship between the error upper-bound constant $C_4$, defined in \eqref{eq:MM}, and the squared kernel perturbation error $\nnorm{K- \tilde K}_h^2$ under four types of perturbations: translation, dilation, cutoff, and oscillation. The first three scatter plots exhibit an approximately linear relation, consistent with \Cref{thm:error_contraction_second}. The oscillation perturbations deviate from the linear trend, reflecting the effect of negative memory and the comparatively large kernel error in this case. \textbf{Bottom:} Fitted decay rate of the expected value of the Lyapunov distance of the difference  $\mathbb{E}[R_t]$ presented in log scale. The fitted rate is approximately 1, which corresponds to twice the decay rate of the kernel, consistent with \Cref{thm:error_contraction_second}. The upper bound used here, \(h(t) = e^{-0.9t}\), may not be optimal but remains universal across all perturbed kernels.
    }
    \label{fig:2}
\end{figure}

In this example, we study the second-order GLE \eqref{eq:second_order_white_noise} and its perturbed form \eqref{eq:second_order_white_noise_perturbed} in three dimensions $d = 3$. The translation-invariant memory kernel is defined as
\begin{equation*}
    K(t) = e^{-At} = Q e^{-\Sigma t} Q^\top,
\end{equation*}
where \( A \in \mathbb{R}^{3 \times 3} \) denotes a randomly generated symmetric matrix. It admits the eigendecomposition \( A = Q \Sigma Q^{\top} \), with \( \Sigma \) diagonal, and the smallest eigenvalue of \( A \) is fixed to be \( 0.5 \). We examine four types of kernel perturbations as before,
\vspace{-2mm}

\begin{align*}
    \text{Translation:}\quad & \tilde K_1(t) = Q e^{-(\Sigma +\alpha I)t} Q^\top,                  & \alpha \in [0, 0.5]; \\
    \text{Cut-off:}\quad     & \tilde K_3(t) = Q e^{-\Sigma t} \mathbf{1}_{t \leq \alpha} Q^\top , & \alpha \in [1, 20];  \\
    \text{Dilation:}\quad    & \tilde K_2(t) = Q e^{-\Sigma (t+\alpha)} Q^\top,                    & \alpha \in [0, 1];   \\
    \text{Oscillation:}\quad & \tilde K_4(t) = Q e^{-\Sigma t} \cos(\alpha t) Q^\top,              & \alpha \in [0, 1].
\end{align*}

%
We set $\gamma = 10$ and choose $U$ to be a confining potential such that $\kappa = u = 10$ and $L_G = 0.01$. The noise level is set to $\sigma = 10^{-4}$. We choose \( h(t) = e^{-0.9t} \), which decays slightly slower than $e^{-2\cdot 0.5t} = e^{-t}$, so that \Cref{asmp:decay_f2} holds uniformly over all perturbations.
We then generate trajectories over 20 independent batches and compute an empirical estimate of
\begin{equation}\label{eq:MM}
    C_4 = \sup_{t > 0} \frac{\mathbb{E}[R_t]}{h(t) + \operatorname{Tr}(\sigma \sigma^{\top})},
\end{equation}
where \( R_t = r((X_t, V_t), (\tilde X_t, \tilde V_t))^2 \) denotes the squared Lyapunov-type distance between the true and estimated trajectories.
As stated in \Cref{thm:error_contraction_second}, the constant $C_4$ is expected to scale linearly with the squared kernel discrepancy \( \| K - \tilde K \|_h^2 \) since the initial values coincide. The scatter plots are presented in the first row of \Cref{fig:2}. The actual decay rate of the Lyapunov distance function is close to one, exceeding the decay rate of \(h\). The choice of \(h\) is intended to provide a consistent basis for comparing the scaling of \(C_4\) and \(\| K - \tilde K \|_h^2\). The decay rates of several representative trajectories are shown in the second row of \Cref{fig:2}.

\section{Conclusion}
\label{sec:conclusion}

This work establishes a trajectory-wise error analysis framework for generalized Langevin dynamics with approximated memory kernels.
Our analysis quantifies how trajectory discrepancies evolve according to the decay rate of the underlying kernel and provides explicit, time-uniform bounds that scale with the weighted kernel error.
Unlike previous studies that focused on moment or equilibrium estimates, the present approach yields path-wise control over prediction accuracy and extends naturally to subexponentially decaying kernels, for which classical Gr\"{o}nwall-type arguments fail.
The combination of Volterra resolvent theory and synchronized coupling allows for a unified treatment of both first- and second-order dynamics, including non-translation-invariant and matrix-valued kernels. Numerical experiments confirm that the empirical decay of trajectory differences aligns closely with theoretical predictions.

Future directions include incorporating fluctuation-dissipation-consistent noise processes and extending the analysis to the underdamped Langevin regime with small friction~$\gamma$, where hypocoercivity interacts nontrivially with memory effects.
Such extensions would further bridge microscopic stochastic modeling with effective non-Markovian representations in coarse-grained dynamics.

\section*{Acknowledgment}
The research is supported in part by the National Science Foundation through awards DMS-2309378 and IIS-2403276. \ifarXiv
    The authors are grateful to Jing An for many helpful discussions, which have significantly improved the presentation of this paper.
\fi

\printbibliography
\end{document}